\documentclass[11pt, a4paper, copyright]{google}

\usepackage{tikz}
\usepackage{multicol}
\usetikzlibrary{positioning,arrows.meta,shapes.geometric,fit,backgrounds,calc,decorations.pathreplacing}

\usepackage[authoryear, sort&compress, round]{natbib}
\makeatletter
\renewcommand{\absfont}{\normalfont\linespread{1.2}\fontsize{11}{12}\selectfont}

\definecolor{zjublue}{RGB}{0,63,136}      
\definecolor{zjulightblue}{RGB}{0,159,227} 

\usepackage[colorlinks = true,
            linkcolor = zjublue,
            urlcolor  = zjulightblue,
            citecolor = zjublue,
            anchorcolor = zjublue]{hyperref}
\usepackage[misc]{ifsym}
\usepackage{minitoc}
\usepackage{cleveref}
\usepackage{hyperref}
\usepackage{subcaption}
\usepackage{booktabs}
\usepackage{fontawesome5}
\usepackage{amsmath}
\usepackage{mathtools}
\usepackage{amssymb}
\usepackage{graphicx}
\usepackage{algorithmic}
\usepackage{algorithm}
\usepackage{tcolorbox}
\usepackage{verbatim}
\usepackage{makecell}
\usepackage{array}
\usepackage{multirow}
\usepackage{xurl}
\usepackage{amsthm}
\usepackage{adjustbox}
\usepackage{caption}
\usepackage{xcolor}
\usepackage{wrapfig}
\usepackage{pifont}
\usepackage{enumitem}

\usepackage{colortbl}
\usepackage{xcolor}

\usepackage{titlesec}
\titlespacing*{\paragraph}{0pt}{0pt}{1em}

\definecolor{headerblue}{RGB}{180, 210, 240}
\definecolor{rowgray}{RGB}{248, 248, 248}
\definecolor{closedrow}{RGB}{255, 245, 220}
\definecolor{groupblue}{RGB}{220, 235, 250}
\definecolor{successgreen}{RGB}{46, 139, 87}
\definecolor{failred}{RGB}{200, 60, 60}
\definecolor{groupheader}{RGB}{230, 230, 230}

\usepackage[titles]{tocloft}
\definecolor{tocsubsec}{HTML}{444444}
\definecolor{tocsubsubsec}{HTML}{666666}

\usepackage{soul}

\usepackage{listings}
\tcbuselibrary{listings,skins,breakable}

\lstdefinestyle{prompt}{
  basicstyle=\ttfamily\footnotesize,
  breaklines=true,
  breakatwhitespace=true,
  columns=fullflexible,
  keepspaces=true,
  showstringspaces=false,
  postbreak=\mbox{\textcolor{gray}{$\hookrightarrow$}\space}
}

\newtheorem{proposition}{Proposition}

\definecolor{accent1}{HTML}{2E6DA4}
\definecolor{accent2}{HTML}{D9534F}
\definecolor{accent3}{HTML}{5CB85C}
\definecolor{lightblue}{RGB}{173,216,230}
\definecolor{lightorange}{RGB}{255,213,170}
\definecolor{lightgreen}{RGB}{176,226,176}
\definecolor{lightyellow}{RGB}{255,255,204}
\definecolor{lightgray}{RGB}{220,220,220}
\definecolor{lightpurple}{RGB}{221,160,221}
\definecolor{lightred}{RGB}{255,182,193}
\definecolor{gray60}{gray}{0.6}
\definecolor{accent4}{HTML}{9B59B6}

\uselogo{}

\usepackage{amsmath,amsfonts,bm}

\def\eqref#1{equation~\ref{#1}}

\def\1{\bm{1}}

\def\vd{{\bm{d}}}

\def\vs{{\bm{s}}}

\def\mD{{\bm{D}}}

\def\mS{{\bm{S}}}

\def\mX{{\bm{X}}}

\DeclareMathAlphabet{\mathsfit}{\encodingdefault}{\sfdefault}{m}{sl}
\SetMathAlphabet{\mathsfit}{bold}{\encodingdefault}{\sfdefault}{bx}{n}

\def\gD{{\mathcal{D}}}

\def\gL{{\mathcal{L}}}

\def\gT{{\mathcal{T}}}

\def\gX{{\mathcal{X}}}

\newcommand{\E}{\mathbb{E}}

\usepackage{url}
\usepackage[utf8]{inputenc}
\usepackage{amsfonts}
\usepackage{nicefrac}
\usepackage{xspace}
\usepackage{comment}
\usepackage{minted}
\makeatletter
\renewenvironment{proof}[1][\proofname]{\par
  \pushQED{\qed}%
  \normalfont
  \topsep=2pt
  \partopsep=0pt
  \trivlist
  \item[\hskip\labelsep
        \itshape
    #1\@addpunct{.}]\ignorespaces
}{%
  \popQED\endtrivlist\@endpefalse
}
\makeatother
\definecolor{highlightcolor}{RGB}{233,247,217}
\definecolor{highgrey}{RGB}{220,220,220}
\definecolor{lightblueB}{HTML}{88AFC9}
\newtheorem{definition}{Definition}
\newcommand{\tool}{\textsc{Domino}\xspace}

\newcommand{\opencoder}{\textsc{OpenCoder}\xspace}
\newcommand{\qwencoder}{\textsc{Qwen2.5-Coder}\xspace}

\AtBeginDocument{%
  \setlength\abovedisplayskip{0.6ex}
  \setlength\belowdisplayskip{0.6ex}}
\allowdisplaybreaks[4]
\makeatletter
\newcommand\figcaption{\def\@captype{figure}\caption}
\newcommand\tabcaption{\def\@captype{table}\caption}
\makeatother
\usepackage{upquote}
\usepackage{tcolorbox}
\newtcolorbox{codebox}[1][]{
    colback=green!10,
    colframe=green!50!black,
    boxrule=2pt,
    arc=5pt,
    left=10pt,
    right=10pt,
    top=10pt,
    bottom=10pt,
    title=#1 
}

\makeatletter
\renewcommand{\maketitle}{\bgroup\setlength{\parindent}{0pt}
  \begin{adjustwidth}{0pt}{24pt}
    \begin{center}
      {\titlefont \@title\par}%
      \vskip11pt
      {\@author\par}%
      \vskip20pt%
    \end{center}
  \end{adjustwidth}
  \egroup
  {\abscontent}%
  \thispagestyle{firststyle}
}
\makeatother

\AtBeginDocument{%
  \setlength{\headsep}{28pt}
  \fancypagestyle{firststyle}{%
    \fancyhead[L]{\raisebox{-0.3\height}{\includegraphics[height=50pt, trim=400 250 30 30, clip]{assets/vivo.png}}}%
    \fancyhead[C]{\raisebox{-0.3\height}{\includegraphics[height=50pt, trim=200 370 300 200, clip]{assets/antgroup.png}}}%
    \fancyhead[R]{\raisebox{-0.3\height}{\includegraphics[height=27pt, trim=30 50 30 30, clip]{assets/zju.png}}}%
    \fancyfoot[L]{%
  \footerfont
  {\bfseries Accepted by KDD 2026}. \quad
  $^{*}$Core contribution;\quad $^{\dagger}$Corresponding authors;\quad
  \ifdefined\correspondingauthor
    \if\relax\the\correspondingauthor\relax
    \else
      {\itshape \the\correspondingauthor}%
    \fi
  \fi
  \newline
  \ifthenelse{\boolean{copyright}}{\copyrightext}{}%
Work partially done during an internship at Ant Group.
}%
    \fancyfoot[R]{\footerfont\bfseries\relax}%
    \fancyfoot[C]{\footerfont\bfseries\relax}%
  }%
}

\title{Domain-Specific Data Synthesis for LLMs via Minimal Sufficient Representation Learning}

\correspondingauthor{Main contact: tongye0820@gmail.com}

\author{%
  \textbf{Tong Ye}$^{*}$,~~
  \textbf{Hang Yu}$^{*}$$^{\dagger}$,~~
  \textbf{Tengfei Ma},~~
  \textbf{Xuhong Zhang},~~
  \textbf{Jianguo Li},\\
  \textbf{Peng Di},~~ 
  \textbf{Peiyu Liu},~~ 
  \textbf{Jianwei Yin},~~ 
  \textbf{Wenhai Wang}$^{\dagger}$~~
  \par\vspace{7pt}
  vivo AI Lab, ~~Ant Group, ~~Zhejiang University
  \par\vspace{5pt}
\texttt{tongye0820@gmail.com},~~\texttt{hyu1@e.ntu.edu.sg}
  \par\vspace{10pt}
  \faGithub~\href{https://github.com/tongye98/DOMINO}{\texttt{https://github.com/tongye98/DOMINO}}  
}

\begin{abstract}
\textbf{Abstract.} Large Language Models have demonstrated remarkable progress in general-purpose capabilities and can achieve strong performance in specific domains through fine-tuning on domain-specific data. However, acquiring high-quality data for target domains remains a significant challenge. Existing data synthesis approaches follow a
deductive paradigm, heavily relying on explicit domain descriptions expressed in natural language and careful prompt engineering, limiting their applicability in real-world scenarios where domains are difficult to describe or formally articulate. In this work, we tackle the underexplored problem of domain-specific data synthesis through an inductive paradigm, where the target domain is defined only through a set of reference examples, particularly when domain characteristics are difficult to articulate in natural language. We propose a novel framework, \tool, that learns a minimal sufficient domain representation from reference samples and leverages it to guide the generation of domain-aligned synthetic data. \tool integrates prompt tuning with a contrastive disentanglement objective to separate domain-level patterns from sample-specific noise, mitigating overfitting while preserving core domain characteristics. Theoretically, we prove that \tool expands the support of the synthetic data distribution, ensuring greater diversity. Empirically, on challenging
coding benchmarks where domain definitions are implicit, fine-tuning on data synthesized by \tool improves Pass@1 accuracy by up to 4.63\% over strong, instruction-tuned backbones, demonstrating its effectiveness and robustness. This work establishes a new paradigm for domain-specific data synthesis, enabling practical and scalable domain adaptation without manual prompt design or natural language domain specifications. 

\end{abstract}

\begin{document}

\maketitle
\newpage

\vspace{0.5em}
{
  \hypersetup{linkcolor=black}
  \setlength{\parskip}{0pt}
  \renewcommand{\contentsname}{\normalfont\large\bfseries Contents}
  \setcounter{tocdepth}{3}
  \begingroup
    \small
    \tableofcontents
  \endgroup
}
\vspace{0.5em}







\newpage
\section{Introduction}
\label{sec:intro}
Recent advances in Large Language Models (LLMs)~\citep{chatgpt, gemini, qwen2025qwen25technicalreport, deepseek-llm} have revolutionized natural language processing, allowing for unprecedented performance in coding~\citep{livecodebench, zhuo2025bigcodebench}, mathematics~\citep{gsm8k, ye2025limoreasoning}, and various downstream tasks ~\citep{plaat2024reasoninglargelanguagemodels, zhang2024scientificlargelanguagemodels, nie2024surveylargelanguagemodels}. In real-world applications, the prevailing approach to downstream adaptation involves leveraging high-quality, domain-specific data for supervised instruction tuning~\citep{ding-etal-2023-enhancing, zhang2024instructiontuninglargelanguage}. This approach rests on a crucial assumption that practitioners are able to obtain a sufficient number of annotated examples to explicitly characterize the target domain.

However, current data synthesis methods share a fundamental, yet often overlooked, limitation: they presuppose the existence of an explicit, human-articulable domain definition. These methods, whether based on Instruction Evolution~\citep{wang-etal-2023-self-instruct,xu2024wizardlm}, Key-point-driven synthesis~\citep{huang2024keypointdrivendatasynthesisenhancement, huang2024mustard}, or System Prompts~\citep{xu2025magpie}, all require translating domain knowledge into textual instructions. (Details of these related methods are shown in Section~\ref{appendix_related}.) This approach is analogous to \textbf{deductive reasoning}—starting from a general rule (the prompt) to generate specific instances.
This paradigm breaks down when we consider how humans often learn. We excel at \textbf{inductive reasoning}: when confronted with a new concept—be it a genre of music, a style of art, or a type of logical puzzle—we do not begin with a formal, axiomatic definition. Instead, we learn by observing a handful of canonical examples and inducing the underlying rules, patterns, and boundaries of the domain. This raises a critical question for machine learning:

\begin{quote}
\textit{``How can we synthesize domain-specific data when the target domain is defined implicitly by a set of reference examples, rather than explicitly in natural language?''} 
\end{quote}

This problem is not a niche concern but is central to practical domain adaptation. It arises in high-value scenarios such as emerging scientific fields where terminology is still fluid, rapidly evolving cultural trends like online slang, or with proprietary business logic that is confidential or poorly documented. In all these cases, the domain is governed by subtle conventions and emergent patterns that resist simple verbalization. While providing examples is easy, writing a formal definition is nearly impossible.
A tempting but flawed approach is to simply use the available examples for supervised fine-tuning (SFT). With a limited dataset, however, SFT encourages the model to memorize superficial features of the training data rather than generalize the underlying domain principles~\citep{chu2025sft}. This leads to poor performance on novel, out-of-distribution tasks. This risk is not merely theoretical; our own experiments (Table~\ref{tab:main}) confirm that naive SFT on a reference set can fail to improve, or even degrade, results, highlighting its inadequacy for true domain adaptation.

The challenge, therefore, is not merely one of data quantity—a problem that synthesis can, in principle, solve by generating as many samples as desired—but of learning quality. To truly learn a domain from examples, a model must distinguish its core, generalizable principles from the idiosyncratic details of each sample. Our key insight is that this can be achieved by learning a minimal sufficient representation—a latent embedding that is sufficient to capture the essential characteristics of the domain while being minimal enough to discard irrelevant, sample-specific noise that leads to overfitting. As shown in Figure~\ref{fig:caption}, we propose a framework that derives this minimal representation from reference samples and then uses it to guide an LLM in generating novel, diverse samples that adhere to the domain's induced principles.

\begin{figure}[t]
    \centering
    \includegraphics[width=0.95\linewidth]{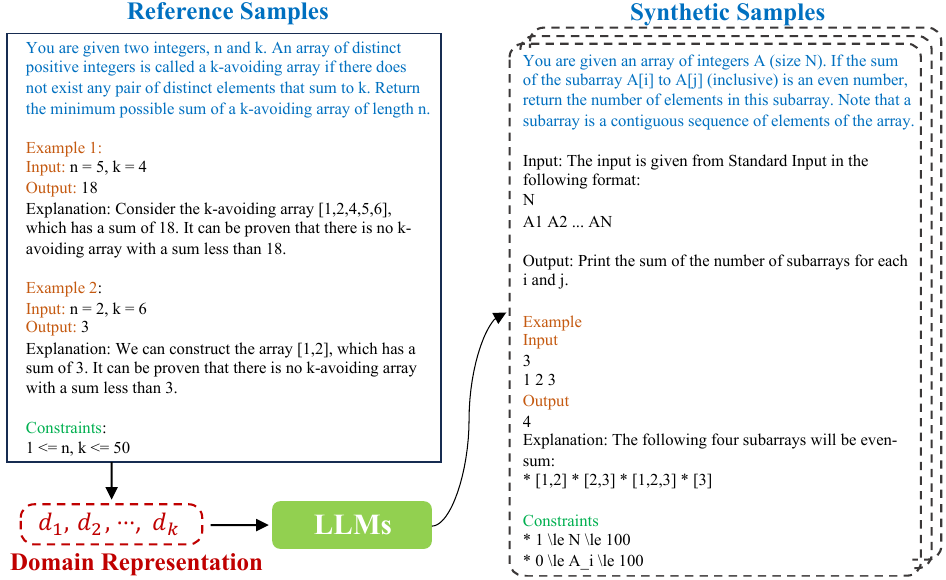}
    \caption{\tool encodes domain characteristics into latent embeddings derived from reference samples, which are then used to guide LLMs in generating new samples. While the domain characteristics are not explicitly defined in natural language, a comparison between reference and synthetic samples at least reveals a shared structural pattern (e.g., description, examples, constraints), even though the specific content differs, highlighting the \tool's ability to capture the domain's distribution—at least its visible format.}
    \label{fig:caption}
\end{figure}

We call our framework \tool (DOmain-specific data synthesis through MINimal sufficient  representatiOn learning). Drawing on information-theoretic principles, \tool operates through two synergistic components. First, it uses prompt tuning to learn a continuous domain embedding that maximizes the likelihood of the reference examples, ensuring sufficiency. Second, it introduces a contrastive disentanglement mechanism that explicitly separates shared domain-level patterns from unique sample-level features, enforcing minimality. By optimizing a combined objective that enforces both data reconstruction fidelity and information minimality, \tool learns to abstract the domain's ``first principles" from the examples provided. In summary, the primary contributions are:
\begin{itemize}[leftmargin=*,itemsep=2pt,topsep=1pt,parsep=2pt]
    \item We formalize and tackle the problem of domain synthesis from implicit examples, mirroring human inductive reasoning. We develop \tool, a framework that learns a minimal sufficient representation by combining prompt tuning with contrastive disentanglement to capture essential domain patterns and promote generalization.
    \item Theoretically, we prove that \tool expands the support of the synthetic data distribution compared to baselines, ensuring greater diversity in generated examples.
    \item Empirically, we show \tool's effectiveness and robustness across diverse tasks. On coding tasks with implicit domains, it improves Pass@1 by up to 4.63\% over strong instruction tuned models, and consistently outperforms baselines on both coding and instruction-following tasks.
\end{itemize}

\section{Related Work}
\label{appendix_related}
In recent years, with the advancement of LLMs, data synthesis has attracted increasing research attention due to its unique advantages~\citep{li2024syntheticdataalmostscratch, long-etal-2024-llms, huang2025datagen}. Broadly, existing LLM-driven synthetic data approaches can be categorized into three main categories:


\noindent \textbf{Instruction Evolution:} This category involves leveraging iterative and carefully crafted prompt engineering to guide LLMs in expanding instruction sets. Specifically, Self-Instruct~\citep{wang-etal-2023-self-instruct} employs a curated seed pool and task-specific prompts to generate additional instruction data. Meanwhile, Evol-Instruct
~\citep{xu2024wizardlm} uses strategically designed prompts to guide LLMs in modifying existing instructions or creating new ones from both depth and breadth perspectives. While effective, these methods rely heavily on complex natural prompt engineering within the domain.

\noindent \textbf{Key-Point-Driven:}  These methods guide LLMs to synthesize data by extracting knowledge and key points from the target domain. \citet{li2024syntheticdataalmostscratch} employ GPT-4 to construct a taxonomy of concepts; however, the resulting synthetic data distribution deviates significantly from real data. \citet{huang2024mustard} build an explicit concept pool from extracted domain concepts. In contrast, KPDDS~\citep{huang2024keypointdrivendatasynthesisenhancement} constructs topic–key point pairs from a large number of seed examples to capture the frequency of domain-specific topics. It then samples multiple topic–key point combinations to guide the generation of new samples.
Despite their effectiveness, these approaches rely heavily on extensive prior knowledge and assume that the target domain possesses a hierarchical conceptual structure. Moreover, they require such internal structures and key points can be explicitly described in natural language.

\noindent \textbf{System Prompt Guided:} Recently, MAGPIE~\citep{xu2025magpie} introduced a method that constructs instruction data by prompting aligned LLMs with a pre-defined query template. By incorporating a domain-specific system prompt, MAGPIE can synthesize large-scale domain data. However, it necessitates the deliberate design of the system prompt to guide the LLM toward the target domain.

However, all three categories of methods break down in real-world scenarios when the target domain cannot be explicitly described in natural language and no prior domain knowledge is available. In contrast, our proposed method, \tool, effectively overcomes this limitation by enabling the synthesis of a virtually unlimited quantity of target-domain data without requiring any prior knowledge or manual prompt engineering.

\section{Methodology}
\label{sec:method}
\subsection{Problem Definition}
We consider the problem of synthesizing data for a specific domain, denoted by $\gD$, with a corresponding observation space $\gX$. We assume access to a limited set of reference data within $\gX$, that is, $\mX^{(1:n)} = \{\mX^{(1)}, \mX^{(2)}, \cdots, \mX^{(n)}\}$. Each $\mX^{(i)}$ consists of both input and output sequences expressed as plain text (i.e., the data for supervised fine-tuning of LLMs). The size of the reference set, $n$, typically ranges from tens to hundreds of examples, a scale limited by high collection and annotation costs. Even within this range, distinct challenges require a carefully designed representation. When $n$ is on the lower end (e.g., tens of examples), the primary risk is overfitting to the idiosyncratic details of the few available samples. As $n$ grows larger (e.g., hundreds), the challenge evolves. A representation that is merely sufficient for reconstruction but not minimal may still overfit in a more subtle way. It might encode spurious correlations or incidental stylistic details common across the dataset by chance, mistaking them for core domain principles. For example, it might learn that problem descriptions frequently use a specific turn of phrase, rather than capturing the abstract structural pattern of the domain itself. Therefore, our objective is to learn a minimal sufficient representation that is robust across this spectrum: it must be general enough to avoid overfitting on smaller datasets, yet discerning enough to separate the essential, generalizable domain characteristics from coincidental, sample-specific artifacts in larger ones. The following details \tool.

\subsection{Implicit Domain Representation Learning via Prompt Tuning}
\label{sec:3-2}
To accomplish our goal of synthesizing domain-specific data, we need to (i) learn a suitable representation of the domain $\gD$ and (ii) leverage that representation to generate new data within the domain. A promising approach is to employ a pre-trained LLM as an implicit encoder to represent the domain, capitalizing on its extensive knowledge base. Mathematically, let $\mD = [\vd_1,\cdots,\vd_k]$ denote the domain representation comprising $k$ \textbf{domain soft tokens}, each with dimension $d$. We aim to optimize $\mD$ such that $p(\mD | \gX) = \prod_{i=1}^n p(\mD | \mX^{(i)})$ is maximized. According to Bayes' rule, 
\begin{align} \label{eq:prompt_tuning}
     p(\mD | \mX^{(i)}) = \frac{p(\mD) p(\mX^{(i)}|\mD)}{p(\mX^{(i)})} \propto p(\mX^{(i)}|\mD),
\end{align}
where $p(\mX^{(i)}|\mD)$ signifies the probability of generating the data example $\mX^{(i)}$ given the domain representation $\mD$, as determined by an LLM. In this context, we assume a uniform prior distribution for the domain representation, $p(\mD) \propto 1$, reflecting our lack of prior knowledge about the domain. Consequently, the problem amounts to soft prompt tuning, where our objective is to identify the domain-level soft token prompt $\mD$ that maximizes the likelihood of all reference data $\mX^{(1:n)}$: 
\begin{align}
    \gL_1 = - \frac{1}{n} \sum_{i=1}^n \log p(\mX^{(i)}|\mD).
\end{align}
Unlike explicit encoders that require structured inputs (e.g., labeled features), LLMs leverage their pre-training on vast corpora to infer domain-specific representations directly from natural language examples. Moreover, soft prompt tuning avoids full model fine-tuning, preserving the LLM’s general capabilities while specializing its behavior for $\gD$. In addition, the same framework applies to diverse domains without architectural changes, as the domain representation $\mD$ adapts to each domain's unique characteristics. Subsequently, new data can be synthesized by inputting the estimated $\mD$ into the LLM, drawing samples according to $p(\mX|\mD)$, which will be discussed in Section~\ref{ssec:data_syn}.

\subsection{Minimal Sufficient Domain Representation Learning}

While the soft prompt tuning approach described above seems promising, it suffers from a critical issue: overfitting. Given a small number of reference examples (tens to hundreds), the learned domain representation $\mD$ can become overly tailored to the training data. This may result in synthetic data that closely resembles the reference examples, failing to capture the broader range of possibilities within the observation space $\gX$ of the domain $\gD$. Indeed, this approach primarily aims to learn a sufficient representation for the domain $\gD$, which we define as follows:
\begin{definition}
    (Sufficiency) A representation $\mD$ is considered sufficient for domain $\gD$ if the conditional distribution $p(\mX^{(i)}|\mD,\gD) = p(\mX^{(i)}|\mD)$ for all $\mX^{(i)} \in \gX$. In other words, $I(\gD;\mX^{(i)} | \mD) = 0$, indicating that $\mD$ captures all information about $\gD$ contained in $\mX^{(i)}$, where $I$ is mutual information.
\end{definition}

However, sufficiency alone does not guarantee generalization. A representation that retains unnecessary details (e.g., unique attributes of individual examples) risks overfitting. To mitigate this, we seek a minimal sufficient representation $\mD^*$:
\begin{definition}
    (Minimal Sufficiency) A sufficient representation $\mD^*$ is minimal if and only if it is a function of every other sufficient representation $\mD$, i.e., $\mD^* = f(\mD)$. According to the deterministic function property of mutual information, $I(\mD^*, \mX^{(i)}) = I(f(\mD), \mX^{(i)}) \leq I(\mD, \mX^{(i)})$, indicating that $\mD^*$ discards all information irrelevant to $\gD$.
\end{definition}
Minimal sufficiency ensures $\mD^*$ focuses on domain-wide patterns (e.g., the general topic, task type, or style) rather than memorizing sample-specific traits (e.g., specific facts, word choices, or constraints). This process is the computational analog to the inductive leap humans make: identifying the essential, shared ``rules" of a concept while ignoring the incidental details of any single example. This enables the generation of diverse synthetic data that spans the observation space $\gX$, rather than replicating training examples.
To enforce minimal sufficiency, we explicitly separate domain-level and sample-level information by introducing sample-specific representations $\mS^{(i)} = [\vs^{(i)}_1,\cdots,\vs^{(i)}_\ell]$, one for each individual example $\mX^{(i)}$. These \textbf{sample-level soft tokens} encode unique aspects of each example $\mX^{(i)}$. We optimize $\mD^*$ and $\mS^{(i)}$ via a contrastive loss:
\begin{align}
    \gL_2 = -\frac{1}{n} \sum_{i=1}^n \log \frac{p(\mX^{(i)}|\mD^*,\mS^{(i)})}{\sum_{j\neq i} p(\mX^{(j)}|\mD^*,\mS^{(i)})}.
\end{align}
The numerator guarantees that $\mD^*$ and $\mS^{(i)}$ jointly reconstruct $\mX^{(i)}$. On the other hand, the denominator penalizes $\mD^*$ for encoding information that could facilitate $\mS^{(i)}$ to reconstruct unrelated examples $\mX^{(j)}$ for $j\neq i$. This design embodies the principle that $\mD^*$ should capture shared aspects across all samples in the domain, while $\mS^{(i)}$ focuses on the unique aspects of the specific sample $\mX^{(i)}$. By forcing the LLM to rely on $\mS^{(i)}$ to distinguish $\mX^{(i)}$ from other samples $\mX^{(j)}$, we prevent $\mD^*$ from overfitting to the specifics of individual examples. This, in turn, promotes a more generalizable, minimal sufficient domain representation (i.e., an inductive abstraction). With this loss, we can prove:
\begin{proposition} \label{prop:max_ISX_D}
    Given $\mX^{(i)} \in \mX^{(1:n)}$, $\gL_2$ directly maximizes $I(\mS^{(i)},\mX^{(i)}|\mD^*)$, ensuring that $\mS^{(i)}$ captures unique information about $\mX^{(i)}$.
\end{proposition}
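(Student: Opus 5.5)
The plan is to read $\gL_2$ as an InfoNCE-type objective and invoke the standard argument (van den Oord et al.) that minimizing an InfoNCE loss maximizes a lower bound on a conditional mutual information. In $\gL_2$ the ``query'' is $\mS^{(i)}$, the positive is $\mX^{(i)}$, and the negatives are the other reference samples $\mX^{(j)}$, $j\neq i$. Throughout, $\mD^*$ is shared and acts as conditioning context. I will treat the index $i$ as uniformly random over $\{1,\dots,n\}$. This induces a joint distribution over pairs $(\mS^{(i)},\mX^{(i)})$ given $\mD^*$. Under this distribution, $p(\mX\mid\mD^*,\mS)$ plays the role of the critic $f(\mX,\mS)$, up to the normalizer $p(\mX\mid\mD^*)$.

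\textbf{Step 1.} Write the critic density ratio as
\[
\frac{p(\mX^{(j)}\mid\mD^*,\mS^{(i)})}{p(\mX^{(j)}\mid\mD^*)},
\]
and observe that the contrastive fraction in $\gL_2$ approximates the categorical posterior that $\mX^{(i)}$ is the true partner of $\mS^{(i)}$ among all candidates. This requires assuming the marginal $p(\mX^{(j)}\mid\mD^*)$ is roughly constant across reference samples, which is where the shared, domain-level role of $\mD^*$ enters. The denominator in $\gL_2$ omits $j=i$. Adding the positive term back only increases the denominator, so the loss with the full sum is a lower bound on $\gL_2$, and the argument can be run on the full-sum version.

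\textbf{Step 2.} Apply the usual Jensen/Bayes argument to get
\[
\gL_2 \;\ge\; \log(n-1) - I(\mS^{(i)};\mX^{(i)}\mid\mD^*)
\quad\text{(up to the approximation in Step 1)},
\]
that is,
\[
I(\mS^{(i)};\mX^{(i)}\mid\mD^*) \;\ge\; \log(n-1) - \gL_2 .
\]
The derivation has three parts. First, replace the sum over negatives by $(n-1)$ times an expectation under $p(\mX\mid\mD^*)$. Second, use the optimal-critic identity. Third, take expectation over the positive pair. It follows that minimizing $\gL_2$ raises a lower bound on the conditional mutual information, which is the sense of ``directly maximizes'' in Proposition~\ref{prop:max_ISX_D}.

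\textbf{Step 3.} Interpret the bound. Because the conditioning is on $\mD^*$, the mutual information counts only information about $\mX^{(i)}$ carried by $\mS^{(i)}$ beyond what $\mD^*$ already provides. This is the unique, sample-specific content. Moreover, if $\mD^*$ alone made $\mX^{(j)}$ likely for every $j$, the fraction would be pushed toward $1/(n-1)$ and the bound toward zero. So $\mS^{(i)}$ must carry the discriminating information.

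The main obstacle is Step 1, which converts model likelihoods $p(\mX\mid\mD^*,\mS)$ into a valid critic and justifies dropping the normalization $p(\mX\mid\mD^*)$. There are two routes. The first is to state it as a modeling assumption: the domain soft tokens induce a near-uniform marginal over reference samples. The second is to note that $p(\mX^{(j)}\mid\mD^*)$ depends only on $j$ and not on $\mS^{(i)}$, so the loss differs from the normalized InfoNCE by a term that does not involve $\mS^{(i)}$. Either way, the gradients with respect to $\mS^{(i)}$ coincide. I would try the second route first.
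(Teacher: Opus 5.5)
There is a genuine gap in Step 1, and it invalidates the bound in Step 2. Write $a_i=p(\mX^{(i)}\mid\mD^*,\mS^{(i)})$ and $b_i=\sum_{j\neq i}p(\mX^{(j)}\mid\mD^*,\mS^{(i)})$. The $i$-th term of $\gL_2$ is then $\ell_i=-\log(a_i/b_i)$, and the corresponding full-sum term is $\tilde\ell_i=-\log\bigl(a_i/(a_i+b_i)\bigr)=\log(1+e^{\ell_i})$. Putting the positive back into the denominator makes the fraction smaller, so $\tilde\ell_i>\ell_i$. The full-sum loss $\tilde{\gL}_2$ is therefore an \emph{upper} bound on $\gL_2$, not a lower bound. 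As a result, the InfoNCE bound $I\ge\log n-\tilde{\gL}_2$ does not transfer to $\gL_2$. In fact, your Step 2 inequality $I(\mS^{(i)};\mX^{(i)}\mid\mD^*)\ge\log(n-1)-\gL_2$ is false. If each $\mS^{(i)}$ makes the model concentrate on $\mX^{(i)}$, then $b_i\to 0$ and $\gL_2\to-\infty$. But in the uniform-index model you set up, the mutual information is at most $H(\mX\mid\mD^*)\le\log n$. The paper's proof takes a more elementary route. It rewrites the conditional mutual information with Bayes' rule and expands $p(\mS^{(i)}\mid\mD^*)$ as a sum over the $n$ reference samples. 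It then drops the factor $1/p(\mX^{(i)}\mid\mD^*)\ge 1$ to reach $\E[\log(a_i/(a_i+b_i))]$, with no InfoNCE machinery and hence no $\log n$ offset. However, its final step ``$\ge-\gL_2$'' makes exactly the same reversed comparison, so it does not close this gap either.

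To repair the argument, keep the positive in the denominator and state the claim as monotonicity rather than as an inequality against $\gL_2$ itself. Take $\mX$ uniform over the $n$ (distinct) reference samples. Then $q(j\mid\mS^{(i)})\propto p(\mX^{(j)}\mid\mD^*,\mS^{(i)})$ is a normalized variational posterior over indices, and the Barber--Agakov bound gives $I\ge\log n-\tilde{\gL}_2$ rigorously. This avoids your approximation of the fixed negatives by $(n-1)$ draws from $p(\mX\mid\mD^*)$. Since $\tilde\ell_i=\log(1+e^{\ell_i})$ is increasing in $\ell_i$ and $\tilde\ell_i\le\max(\ell_i,0)+\log 2$, lowering the terms of $\gL_2$ raises this valid bound. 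For example, $I\ge\log(n/2)-\frac1n\sum_i\max(\ell_i,0)$, which is a defensible reading of ``directly maximizes''. Finally, the obstacle you single out is not the real one. The InfoNCE bound holds for any positive critic, so $p(\mX\mid\mD^*,\mS)$ can be used as is, with no uniform-marginal assumption. Your second route is also wrong as stated: the weights $1/p(\mX^{(j)}\mid\mD^*)$ sit inside the denominator sum. They separate into an $\mS^{(i)}$-independent additive term only when they are all equal.
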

\begin{proposition} \label{prop:min_ISD}
    $\gL_2$ directly minimizes $I(\mS^{(i)};\mD^*)$, promoting disentanglement between $\mS^{(i)}$ and $\mD^*$.
\end{proposition}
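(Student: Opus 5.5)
The plan is to treat the contrastive objective $\gL_2$ as an InfoNCE-type estimator and to read off a mutual-information term that involves $\mS^{(i)}$ and $\mD^*$ jointly. Following the style of Proposition~\ref{prop:max_ISX_D}, I take $\mS^{(i)}$ and $\mD^*$ as random variables, with the randomness coming from the choice of index $i$ and from the learned parameters. I also treat $p(\mX|\mD^*,\mS)$ as the model's conditional likelihood.

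\textbf{Step 1: rewrite the denominator as a marginal.} The sum $\frac{1}{n-1}\sum_{j\neq i}p(\mX^{(j)}|\mD^*,\mS^{(i)})$ is a Monte Carlo estimate of $\E_{\mX}\,p(\mX|\mD^*,\mS^{(i)})$, taken over the empirical marginal of reference data. Up to the constant $\log(n-1)$, minimizing $\gL_2$ therefore raises the numerator likelihood. At the same time it lowers the average likelihood that $(\mD^*,\mS^{(i)})$ assigns to \emph{other} samples.

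\textbf{Step 2: decompose the denominator term.} I would factor the model likelihood as
\[
\log p(\mX|\mD^*,\mS)=\log p(\mX|\mD^*)+\log\frac{p(\mX|\mD^*,\mS)}{p(\mX|\mD^*)}.
\]
Averaged over $\mX^{(j)}$, $j\neq i$, the second piece is a pointwise dependence term between $\mS^{(i)}$ and the data, conditioned on $\mD^*$. Pushing this term down for non-matching samples means $\mS^{(i)}$ may not carry information that is generically predictive of the domain. Information that is generically predictive of all samples is exactly what $\mD^*$ is supposed to hold.

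\textbf{Step 3: convert to $I(\mS;\mD^*)$.} Using the chain rule, $I(\mS;\mX,\mD^*)=I(\mS;\mD^*)+I(\mS;\mX|\mD^*)$. The InfoNCE bound gives $-\gL_2\le I(\mS;\mX|\mD^*)$ plus a constant. That bound is the content of Proposition~\ref{prop:max_ISX_D}, which I assume.

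I would then argue that the denominator's negative samples share the same $\mD^*$ and differ only in the $\mX$ they are scored on. Any component of $\mS^{(i)}$ that is predictable from $\mD^*$, so that it contributes to $I(\mS;\mD^*)$, raises the likelihood of the negatives $\mX^{(j)}$ exactly as much as the positive. Such a component therefore increases the denominator without increasing the ratio. Formally, I would write $\mS=(\mS_{\parallel},\mS_{\perp})$, where $\mS_\parallel$ is a function of $\mD^*$ and $\mS_\perp$ is independent of it. I would then show that $\gL_2$ is non-increasing when $\mS_\parallel$ is removed, since $\mD^*$ already supplies that information to every term. This gives that optimizers of $\gL_2$ drive $I(\mS^{(i)};\mD^*)$ toward zero.

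\textbf{Main obstacle.} The hard part is Step 3's claim that redundant components strictly hurt, or at least never help. This needs an assumption that the LLM's conditional depends on $(\mD^*,\mS)$ only through their joint information content, i.e.\ $p(\mX|\mD^*,\mS)=p(\mX|\mD^*,\mS_\perp)$ whenever $\mS_\parallel$ is a function of $\mD^*$. I would state this as an idealized modeling assumption. Under it, the argument reduces to the standard fact that InfoNCE optimizes the \emph{conditional} information $I(\mS;\mX|\mD^*)$. Because of the chain-rule identity above, that means only the part of $\mS$'s information orthogonal to $\mD^*$ is rewarded, and $I(\mS;\mD^*)$ is implicitly minimized.
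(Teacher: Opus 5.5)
The gap is in Step 3. Under your idealized assumption $p(\mX|\mD^*,\mS)=p(\mX|\mD^*,\mS_\perp)$, every likelihood in $\gL_2$ is unchanged when $\mS_\parallel$ is dropped, and this holds for the numerator and for all negatives. So ``non-increasing'' is in fact equality: $\gL_2$ is \emph{indifferent} to $I(\mS^{(i)};\mD^*)=H(\mS_\parallel)$; it does not penalize it. An exactly flat direction exerts no pressure. The minimizers of $\gL_2$ therefore include configurations with arbitrary redundancy, and ``optimizers drive $I(\mS^{(i)};\mD^*)$ toward zero'' does not follow. Your closing sentence makes the same slip: that only the part of $\mS$'s information not shared with $\mD^*$ is \emph{rewarded} does not mean the shared part is \emph{penalized}. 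Separately, a split $\mS=(\mS_\parallel,\mS_\perp)$, with $\mS_\parallel$ a function of $\mD^*$ and $\mS_\perp$ independent of $\mD^*$, does not exist in general. Statistical dependence need not be carried by a common deterministic component, so even your invariance claim covers only a special case.

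The missing idea is the one the paper's proof rests on: a finite information budget for $\mS^{(i)}$. Because the sample-level prompt has limited capacity, anything it spends encoding information already available from $\mD^*$ is unavailable for $I(\mS^{(i)};\mX^{(i)}|\mD^*)$. That conditional term is exactly what $\gL_2$ rewards via Proposition~\ref{prop:max_ISX_D}, so the budget makes redundancy costly rather than neutral. You already wrote the right identity to make this precise. It is in fact more careful than the paper's version, which puts $I(\mS^{(i)};\mX^{(i)})$ on the left and so tacitly needs $I(\mS^{(i)};\mD^*|\mX^{(i)})=0$. Combine three facts: $I(\mS;\mD^*)=I(\mS;\mX,\mD^*)-I(\mS;\mX|\mD^*)$, the capacity bound $I(\mS;\mX,\mD^*)\le H(\mS)$, and Proposition~\ref{prop:max_ISX_D}. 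Together they give $I(\mS^{(i)};\mD^*)\le H(\mS^{(i)})+\gL_2$, up to your additive constant. With capacity held fixed, minimizing $\gL_2$ then tightens an upper bound on $I(\mS^{(i)};\mD^*)$, and you need neither the decomposition nor the idealized likelihood assumption.
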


The detailed proofs of \textbf{Proposition 1} and \textbf{Proposition 2} are listed in Appendix~\ref{appendix_theo_1} and~\ref{appendix_theo_2}. Finally, the overall loss function for learning the minimal sufficient representation $\mD^*$ for the domain $\gD$ is a combination of the domain-level likelihood and the contrastive loss:
\begin{align} \label{eq:final_loss}
    \gL = \gL_1 + \lambda \gL_2,
\end{align}
where $\lambda$ is a weighting hyperparameter that controls the trade-off between maximizing data likelihood and promoting domain representation disentanglement. A higher $\lambda$ encourages the LLM to learn a more minimal, generalizable domain representation by preventing overfitting to individual samples. Further, we theoretically prove the connection between optimization objective $\gL$ and the information-theoretic principles of minimal sufficiency (\textbf{Proposition 3}, proof in Appendix~\ref{appendix_theo_3}).

\begin{proposition} \label{prop:min_loss}
The optimization objective $\gL$ serves as a tractable proxy for learning a representation $\mD^*$ that approximates the information-theoretic properties of a minimal sufficient statistic. Specifically, minimizing $\gL$ jointly encourages: 1. \textbf{Sufficiency}: The maximization of mutual information $I(\mD^*; \mX)$ , ensuring $\mD^*$ captures relevant domain information. 2. \textbf{Minimality}: The minimization of mutual information $I(\mD^*; \mS)$ , ensuring $\mD^*$ is disentangled from sample-specific information.
\end{proposition}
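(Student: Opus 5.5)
I would prove Proposition~\ref{prop:min_loss} by treating each term of $\gL = \gL_1 + \lambda\gL_2$ as a variational bound on one of the two mutual-information quantities, and then combining them. The argument is heuristic in the same sense as Propositions~\ref{prop:max_ISX_D} and~\ref{prop:min_ISD}: we regard the index $i$ (uniform on $\{1,\dots,n\}$) as a random variable, so that $\mX$, $\mS$ (the sample-level soft tokens $\mS^{(i)}$) and the fixed $\mD^*$ live in a common probabilistic model defined by the LLM likelihood $p(\mX|\cdot)$.

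\textbf{Sufficiency from $\gL_1$.} I would write $I(\mD^*;\mX) = H(\mX) - H(\mX|\mD^*)$. The entropy $H(\mX)$ is fixed by the empirical reference distribution and does not depend on the parameters. For the conditional entropy I would apply the standard variational (Barber--Agakov) argument: for any decoder $q$,
\begin{align*}
H(\mX|\mD^*) \le -\E[\log q(\mX|\mD^*)],
\end{align*}
because the gap is a nonnegative KL divergence. Taking $q$ to be the LLM's $p(\mX|\mD)$ and replacing the expectation by the empirical average over $\mX^{(1:n)}$ gives exactly $\gL_1$. Hence $I(\mD^*;\mX) \ge H(\mX) - \gL_1$, and minimizing $\gL_1$ tightens and raises a lower bound on $I(\mD^*;\mX)$. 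The Bayes-rule identity of Section~\ref{sec:3-2}, with its uniform prior, justifies identifying the maximum-likelihood $\mD$ with this objective.

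\textbf{Minimality from $\gL_2$.} Here I would invoke Proposition~\ref{prop:min_ISD} directly, which already states that $\gL_2$ minimizes $I(\mS^{(i)};\mD^*)$. I would reinforce this with Proposition~\ref{prop:max_ISX_D}. That result says $\gL_2$ pushes the sample-specific information $I(\mS;\mX|\mD^*)$ into $\mS$. Using the chain rule
\begin{align*}
I(\mX;\mD^*,\mS) = I(\mX;\mD^*) + I(\mX;\mS|\mD^*),
\end{align*}
the per-sample content is then carried by $\mS$ rather than $\mD^*$. Since $\gL_1$ only requires $\mD^*$ to explain what is common to all $\mX^{(i)}$, $\mD^*$ need not encode it. Combining the two bounds, minimizing $\gL$ minimizes an upper bound on $-I(\mD^*;\mX) + \lambda\, I(\mD^*;\mS)$ up to constants. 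This has the form of an information-bottleneck Lagrangian, whose optima are the standard characterization of (approximate) minimal sufficient statistics. That yields the claimed proxy interpretation.

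\textbf{Main obstacle.} The delicate step is making the minimality link rigorous: relating the contrastive denominator to an upper bound on $I(\mS;\mD^*)$, rather than just asserting it via Proposition~\ref{prop:min_ISD}. A further difficulty is that $\mD^*$ is a deterministic parameter, so mutual informations involving it are degenerate unless one adopts a randomized or posterior view of $\mD^*$. I would handle this by explicitly modeling $\mD^*$ through the posterior $p(\mD|\mX)$ from equation~\ref{eq:prompt_tuning}. I would then state the result as an inequality between the loss and the bottleneck objective up to additive constants, not as an exact equivalence.
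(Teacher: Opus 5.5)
\textbf{There is a gap in the minimality half of your proof.}

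Your sufficiency half is the paper's Part~1: $H(\mX)$ is parameter-free, so lowering $H(\mX|\mD^*)$ raises $I(\mD^*;\mX)$. Your Barber--Agakov step, $I(\mD^*;\mX)\ge H(\mX)-\gL_1$, is a cleaner version of the paper's ``in expectation'' identification.

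The gap sits at the point you yourself flag. Leaning on Proposition~\ref{prop:min_ISD} is what the paper effectively does, since its Part~2 re-runs that proposition's argument. The reasoning you add on top, however, never reaches $I(\mD^*;\mS)$:
\begin{itemize}
\item Your chain rule $I(\mX;\mD^*,\mS)=I(\mX;\mD^*)+I(\mX;\mS|\mD^*)$ splits the information about $\mX$ between $\mD^*$ and $\mS$. The quantity $I(\mD^*;\mS)$ does not appear in it. Since $I(\mX;\mD^*,\mS)\le H(\mX)$, raising $I(\mX;\mS|\mD^*)$ competes with $I(\mX;\mD^*)$. So this identity exposes the trade-off with your sufficiency term; it does not bound $I(\mD^*;\mS)$.
\item The claim that $\gL_1$ ``only requires $\mD^*$ to explain what is common'' fails. $\gL_1$ is the vanilla prompt-tuning loss. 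It rewards $\mD^*$ for raising the likelihood of each individual $\mX^{(i)}$, idiosyncrasies included, which is exactly the overfitting $\gL_2$ was introduced to counter.
\end{itemize}
Consequently, the upper bound on $-I(\mD^*;\mX)+\lambda I(\mD^*;\mS)$ that you assert is not established.

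The missing idea is the paper's: a fixed information budget for $\mS$. Decompose the information carried by $\mS$, not the information about $\mX$: $I(\mS;\mX,\mD^*)=I(\mS;\mD^*)+I(\mS;\mX|\mD^*)$. Here $I(\mS;\mX,\mD^*)\le H(\mS)$, a capacity the paper treats as fixed by the architecture. Hence $I(\mS;\mD^*)\le H(\mS)-I(\mS;\mX|\mD^*)$. Taking Proposition~\ref{prop:max_ISX_D}'s bound $I(\mS;\mX|\mD^*)\ge-\gL_2$ at face value gives $I(\mS;\mD^*)\le H(\mS)+\gL_2$. Adding $\lambda$ times this to your Part~1 bound yields $-I(\mD^*;\mX)+\lambda I(\mD^*;\mS)\le\gL+\lambda H(\mS)-H(\mX)$, which is the inequality you wanted.

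The paper writes the budget as $I(\mS;\mX)=I(\mS;\mD^*)+I(\mS;\mX|\mD^*)$. That is not an identity in general, because it omits $I(\mS;\mD^*|\mX)$. The version with $I(\mS;\mX,\mD^*)$ is correct and is still bounded by $H(\mS)$.

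Finally, drop or soften the information-bottleneck claim. The IB Lagrangian penalizes a representation's information about its own input while rewarding relevance to a separate target. Yours rewards $I(\mD^*;\mX)$ and penalizes $I(\mD^*;\mS)$, so the minimal-sufficient-statistic characterization of IB optima does not transfer. The proposition only asserts that $\gL$ pushes the two quantities in the stated directions, and the two bounds above deliver that.
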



\begin{figure}
    \centering
    \includegraphics[width=1.0\linewidth]{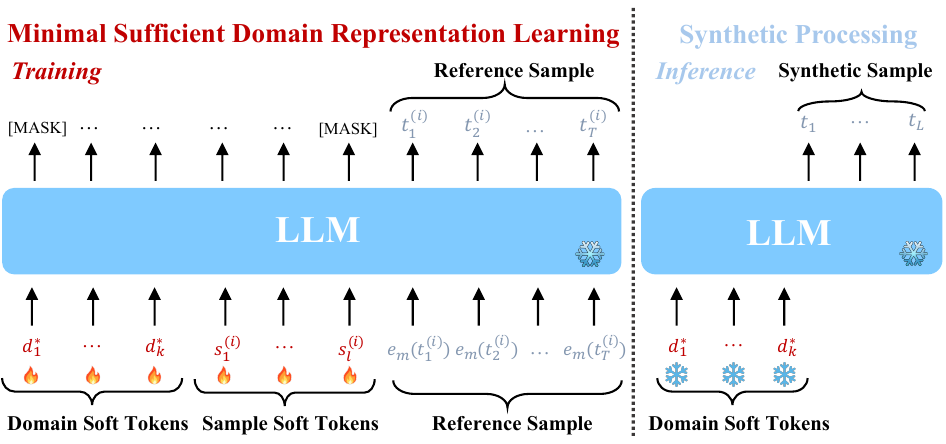}
    \caption{The left side represents the minimal sufficient domain representation learning process, where $t_1,\cdots,t_T$ denote the raw natural language tokens of a reference sample, $e_m$ denotes the corresponding embedding. Domain-level soft tokens $\mD^*$ and sample-level soft tokens $\mS^{(i)}$ are jointly optimized. The right side represents using only optimized $\mD^*$ to synthesize domain-specific samples.}
    \label{fig:f2}
\end{figure}

\subsection{Data Synthesis}
\label{ssec:data_syn}
In practice, the minimal sufficient domain representation $\mD^*$ and the sample-level prompt $\mS^{(i)}$ are jointly optimized during training using the objective $\gL$ in \eqref{eq:final_loss} (on the left side of Figure~\ref{fig:f2}). Once training is complete, only the learned $\mD^*$ is used to guide the LLM during the data synthesis phase (i.e., sampled from distribution $p(\mX | \mD^*)$), as shown on the right side of Figure \ref{fig:f2}. Notably, the LLM is fixed throughout both the training and synthesis stages. 
The key intuition is that by disentangling $\mD^*$ from sample-specifics, we prevent it from merely memorizing the training examples and instead encourage the generation of more diverse outputs. \textbf{Proposition 4} provides the formal theoretical guarantee for this. It proves that, under quantifiable conditions, the distribution guided by our minimal sufficient representation, $p(\mX|\mD^*)$, has a strictly larger effective support (the set of high-probability outcomes) than the distribution $p_{D}$ guided by a standard, non-minimal prompt.
\begin{proposition}
Let $p_{D}$ and $p_{D^*}$ denote the distributions $p(\mX | \mD)$ (from vanilla prompt tuning) and $p(\mX|\mD^*)$ (from minimal sufficient, disentangled prompt tuning via $\mathcal{L}_1 + \lambda\mathcal{L}_2$), respectively. For any $\epsilon \in (0, 1/e)$, define the $\epsilon$-support of $p$, $\mathrm{supp}_{\epsilon}(p) = \{x : p(x) > \epsilon\}$, with cardinality $S_p^\epsilon=|\mathrm{supp}_{\epsilon}(p)|$. Define the uniformity gap for $p$ as $\delta_p = \log S_p^\epsilon - H(p)$, where $H(p)$ is the entropy of $p$. If
\begin{align}
    H(p_{D^*}) - H(p_{D}) > \delta_{p_{D}} + \epsilon \log \frac{1}{\epsilon}
\end{align}
then $S_{p_{D^*}}^\epsilon > S_{p_{D}}^\epsilon$; that is, the $\epsilon$-support of $p(\mX|\mD^*)$ is strictly larger than that of $p(\mX|\mD)$.
\end{proposition}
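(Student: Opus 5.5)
The plan is to reduce the statement to a single entropy--support inequality applied to $p_{D^*}$, and then chain it with the definition of the uniformity gap of $p_D$. The key auxiliary claim I aim to establish is the following: for any distribution $p$ and $\epsilon\in(0,1/e)$,
\begin{align*}
H(p) \le \log S_p^\epsilon + \epsilon\log\frac{1}{\epsilon}.
\end{align*}
Given this, the proposition follows in one line. Applying the claim to $p_{D^*}$ and then using the hypothesis gives
\begin{align*}
\log S^\epsilon_{p_{D^*}} \ge H(p_{D^*}) - \epsilon\log\frac{1}{\epsilon} > H(p_D) + \delta_{p_D}.
\end{align*}
By the definition of the uniformity gap, $H(p_D)+\delta_{p_D} = \log S^\epsilon_{p_D}$, so $\log S^\epsilon_{p_{D^*}} > \log S^\epsilon_{p_D}$. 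Strictness is inherited from the strict hypothesis, and monotonicity of $\log$ gives $S^\epsilon_{p_{D^*}} > S^\epsilon_{p_D}$. Note that $\delta_{p_D}$ is only used as a definitional identity, so no sign assumption on it is needed.

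To prove the auxiliary claim, I would split the entropy over $A=\mathrm{supp}_\epsilon(p)$ and its complement $A^c$. Let $m=p(A)$. On $A$, the grouping property and the fact that uniform distributions maximize entropy give
\begin{align*}
-\sum_{x\in A}p(x)\log p(x) = m\,H(p|_A) - m\log m \le m\log S_p^\epsilon - m\log m.
\end{align*}
On $A^c$, every atom satisfies $p(x)\le\epsilon<1/e$. The function $t\mapsto -t\log t$ is increasing on $(0,1/e)$, so each tail term is at most $\epsilon\log(1/\epsilon)$.

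The main obstacle is controlling the \emph{total} tail contribution. Without further structure it is not bounded by $\epsilon\log(1/\epsilon)$: a large tail made of many tiny atoms can carry arbitrarily large entropy. I would therefore first check what structure of the LLM output distributions can be used here. One option is to make explicit a regularity assumption: the tail $A^c$ carries total mass at most $\epsilon$, and its entropy contribution is dominated by a single worst-case term. Then
\begin{align*}
-\sum_{x\in A^c}p(x)\log p(x) \le \epsilon\log\frac{1}{\epsilon}.
\end{align*}
Under this assumption we have $m\ge 1-\epsilon$, and the head term satisfies $m\log S - m\log m \le \log S$ up to a correction $-m\log m$. I would try to absorb that correction into the same $\epsilon\log(1/\epsilon)$ slack. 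Since $-m\log m \le (1-m)\le\epsilon$ for $m$ close to $1$, this term requires care.

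If the clean constant does not close, my fallback is to restate the claim with an explicit tail-entropy term $H_{\text{tail}}(p_{D^*})$ in place of $\epsilon\log(1/\epsilon)$. I would then note that the proposition's condition is exactly the specialization of that version to tails obeying the above bound. This keeps the chaining argument of the first paragraph unchanged.
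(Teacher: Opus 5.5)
Your reduction is exactly the paper's proof. The paper writes $\log S^\epsilon_p=H(p)+\delta_p$, invokes a ``standard entropy--support bound'' $H(p)\le\log S^\epsilon_p+\epsilon\log\frac{1}{\epsilon}$ for $p_{D^*}$ without proving it, and chains it with the hypothesis as you do. The obstacle you hit is therefore a gap in the paper too, and it cannot be closed.

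Your reduction loses nothing. The hypothesis depends on $p_D$ only through $H(p_D)+\delta_{p_D}=\log S^\epsilon_{p_D}$. So taking $p_D$ uniform on $S^\epsilon_{p_{D^*}}\ge 1$ atoms shows the proposition is \emph{equivalent} to your auxiliary claim. The claim fails even under your regularity assumption. Take $p=(1-\epsilon,\epsilon)$. Because the $\epsilon$-support uses a strict inequality, $S^\epsilon_p=1$, and the tail is a single atom contributing exactly $\epsilon\log\frac{1}{\epsilon}$. Yet
\begin{align*}
H(p)-\log S^\epsilon_p=\epsilon\log\frac{1}{\epsilon}+(1-\epsilon)\log\frac{1}{1-\epsilon}>\epsilon\log\frac{1}{\epsilon}.
\end{align*}
The excess is precisely the head correction $-m\log m$ (with $m=1-\epsilon$) that you flagged, and it cannot be absorbed. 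Concretely, take $p_D$ a point mass ($S^\epsilon_{p_D}=1$, $\delta_{p_D}=0$) and $p_{D^*}=(1-\epsilon,\epsilon)$. They satisfy the hypothesis, yet $S^\epsilon_{p_{D^*}}=S^\epsilon_{p_D}=1$. The statement as written is false, so neither your primary route nor the paper's argument can establish it.

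Your fallback is the only viable option, but it proves a modified proposition, and the tail term should be made exact. Let $A=\mathrm{supp}_\epsilon(p)$, $m=p(A)$, and $h$ the binary entropy. The grouping identity $H(p)=h(m)+m\,H(p|_A)+(1-m)\,H(p|_{A^c})$ and $H(p|_A)\le\log S^\epsilon_p$ give
\begin{align*}
H(p)\le\log S^\epsilon_p+\eta(p),\qquad \eta(p):=h(m)+(1-m)H(p|_{A^c}),
\end{align*}
valid whenever $S^\epsilon_p\ge 1$. Replace $\epsilon\log\frac{1}{\epsilon}$ by $\eta(p_{D^*})$ in the hypothesis, and assume $S^\epsilon_{p_D}\ge 1$ so that $\delta_{p_D}$ is finite. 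This also forces $S^\epsilon_{p_{D^*}}\ge 1$: otherwise $\eta(p_{D^*})=H(p_{D^*})$, and the hypothesis would give $\log S^\epsilon_{p_D}<0$. Your one-line chain then goes through verbatim.

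An $\epsilon$-only constant needs genuine structural assumptions. Tail mass at most $\epsilon$ spread over at most $K$ atoms gives $\eta\le h(\epsilon)+\epsilon\log K$. The two-point example shows $h(\epsilon)$ is sharp already for $K=1$, so $\epsilon\log\frac{1}{\epsilon}$ is too small even in that most favorable regime. Also, sequence-level LLM distributions typically put most of their mass on sequences of probability far below $\epsilon$. A small-tail assumption is therefore hard to justify in the paper's setting.
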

\begin{proof}
     See Appendix~\ref{appendix_theo_4}.
\end{proof}

\section{Experiments}
\label{sec:experiments}
\subsection{Experimental Setup}
\textbf{Domain and Benchmark Selection.}
To effectively evaluate \tool, we require benchmarks that embody our core problem definition: domains where characteristics are implicit, evolving, and difficult to articulate textually. The recently proposed dynamic benchmark LiveCodeBench~\citep{livecodebench} provides a perfect testbed. In competitive programming, the ``domain'' is not a static category like ``code generation'', but the constantly evolving style, complexity, and set of algorithmic patterns characteristic of platforms during a specific time frame. It is notoriously difficult to write a single prompt that captures ``the essence of a typical LeetCode problem from late 2024'', yet it is easy to collect examples from that period.
Following these principles, we select two distinct domains from the recently proposed dynamic benchmark LiveCodeBench~\citep{livecodebench}: Live Code Generation and Live Code Execution. We define a temporal cutoff point for each domain: domain samples collected before this point are used as reference data (input-output pairs), while domain samples collected afterward serve as the test set. This setup directly tests the model's ability to induce generalizable principles rather than memorizing past examples. The details are in Appendix \ref{appendix_exper_data}.




\textbf{LLM Backbone Selection.}
We select \opencoder-7B~\citep{huang2025opencoderopencookbooktoptier}, \qwencoder-7B~\citep{hui2024qwen25coder}, and \qwencoder-14B~\citep{hui2024qwen25coder} as the LLM backbones, as they are among the top-performing LLMs in general code tasks. We use these LLMs (\textit{Instruction} version) to synthesize domain-specific data that aligns with the distributions of the two target domains.

\textbf{Baselines.}
We detail the baseline methods; all corresponding prompts are provided in Appendix~\ref{appendix_exper_baseline_prompts}.
\begin{itemize}[leftmargin=10pt,itemsep=2pt,topsep=2pt,parsep=0pt]
    \item \textbf{Direct Prompt}: We directly prompt the LLM backbone (both the \textit{Base} and \textit{Instruction} versions) to assess the LLM backbone's basic capabilities on the two target domains.
    \item \textbf{Reference SFT}: We use the reference data from the target domain to perform supervised fine-tuning (SFT) on the corresponding LLM \textit{Base} version, in order to evaluate its domain adaptation capability.
    \item \textbf{MAGPIE-Human}: We manually inspect the reference data and attempt to summarize the domain characteristics in natural language, which are then used as the MAGPIE system prompt to guide the LLM in synthesizing domain-specific samples.
    \item \textbf{MAGPIE-Few Shot}: We randomly select three reference samples and prompt the LLM to summarize the underlying domain characteristics in natural language. This self-generated domain description is then used as the MAGPIE system prompt in synthesizing domain-specific samples.
    \item \textbf{\tool-Direct Domain}: We directly use the $\gL_1$ objective in Section \ref{sec:3-2} to obtain the implicit domain representation, which is then used to guide the LLM in synthesizing domain samples.
\end{itemize}

\textbf{\tool Train Details.}
For both \tool-Direct Domain and \tool, we use input sequences from domain reference samples for representation learning. We set the hyperparameter $\lambda$ to 1, with 256 domain-level soft tokens in both settings, and 256 sample soft tokens additionally used in \tool. Note that the representation training is conducted using \textit{Instruction} version of the LLM.

\textbf{Synthesis Process and Post-processing.}
For MAGPIE-\{Human, Few Shot\}, \tool-Direct Domain, and \tool, we first synthesize domain-specific input sequences using vLLM~\citep{kwon2023efficient} with the \textit{Instruction} version of LLMs, setting temperature $\gT$ to 0.8. For fair comparison, we generate 80K input sequences for Live Code Generation domain and 40K for Live Code Execution domain across all four methods. A unified post-processing pipeline is then applied to filter out low-quality inputs (using the same LLM; details in Appendix \ref{appendix_exper_postprocessing}). After filtering, the same LLM is used to generate output sequences, forming input-output pairs for subsequent supervised fine-tuning.

\textbf{Domain Adaptation.}
For all synthesized samples across methods generated using the \textit{Instruction} version LLMs, we perform supervised fine-tuning on the corresponding \textit{Base} versions using \textsc{Llama-Factory}~\citep{zheng2024llamafactory}. Detailed training configurations are provided in Appendix \ref{appendix_exper_train_configs}.

\begin{table}[!t]
\centering
\small
\vspace{-5pt}
\caption{Main Results of \tool and the compared methods on the two target domains. \tool-Direct Domain is an ablation of our method using only $\gL_1$ without $\gL_2$.}
\vspace{5pt}
\begin{tabular}{c|c|c|c|c}
\toprule
    & \multicolumn{3}{c}{ \textbf{Live Code Generation}} & \textbf{Live Code Execution} \\
 \cmidrule(r){2-4} \cmidrule(l){5-5}      \textbf{Methods}  & Pass@1     & Pass@5    & Pass@10    & Pass@1         \\
\midrule
\midrule
 \rowcolor{highgrey}  \multicolumn{5}{c}{Using \opencoder-8B-Instruct \cite{huang2024keypointdrivendatasynthesisenhancement}   as Backbone}            \\
 \midrule
    \opencoder-8B-Base  &      7.96      &   14.56        &      17.36      &       25.93         \\
    \opencoder-8B-Instruct  &       10.00     &    \textbf{17.23}       &   19.76         &   37.96      \\
    Reference SFT  &     9.34       &    14.13       &    16.88        &        27.78        \\
    MAGPIE-Human      &     8.77       &      13.83     &      15.88      & 32.41              \\
    MAGPIE-Few Shot & 9.81  & 15.32 &  17.49 & 34.26  \\

   \rowcolor{highlightcolor}  \tool-Direct Domain      &    10.15        &    16.24        &     19.39       &       38.88         \\
  \rowcolor{highlightcolor}   \tool &    \textbf{12.63}        &     16.81      &      \textbf{20.44}     &        \textbf{42.59}       \\
\midrule
\rowcolor{highgrey} \multicolumn{5}{c}{Using \qwencoder-7B-Instruct \cite{hui2024qwen25coder}  as Backbone}          \\
\midrule
    \qwencoder-7B-Base &   9.04       &       15.35    &      17.96      &    45.37            \\
     \qwencoder-7B-Instruct &     16.88       &     21.73      &    23.35        &    55.55            \\
    Reference SFT  &      13.47      &     21.59      &     23.95       &             36.11   \\
    MAGPIE-Human      &     12.45       &   20.85        &    22.72        &           42.59     \\
    MAGPIE-Few Shot & 14.36 & 20.98 &  24.23 & 48.15  \\

  \rowcolor{highlightcolor}  \tool-Direct Domain      &    15.74        &      24.66     &     28.14       &             50.92   \\
   \rowcolor{highlightcolor}  \tool  &     \textbf{17.31}       &      \textbf{26.81}     &     \textbf{29.94}       &            \textbf{56.48}    \\
\midrule
 \rowcolor{highgrey} \multicolumn{5}{c}{Using \qwencoder-14B-Instruct \cite{hui2024qwen25coder} as Backbone}                          \\
 \midrule
     \qwencoder-14B-Base &      19.46      &    26.42       &     27.4       &     45.37           \\
     \qwencoder-14B-Instruct &    23.35        &   30.13        &  31.73          &  59.26              \\
         Reference SFT   &     19.64       &     24.86      &    26.33        &           38.88     \\
    MAGPIE-Human      &     20.78       &    27.17       &    28.74        &           57.41     \\
    MAGPIE-Few Shot  & 20.12 & 25.27 & 27.45 & 57.41 \\
   \rowcolor{highlightcolor}  \tool-Direct Domain      &     21.62       &     28.97      &      31.73      &            60.19    \\
   \rowcolor{highlightcolor}  \tool  &     \textbf{24.75}       &      \textbf{30.45}     &      \textbf{33.24}      &            \textbf{62.04}    \\
\bottomrule
\end{tabular}
\label{tab:main}
\vspace{-7pt}
\end{table}

\begin{figure}[t]
    \centering
    \begin{minipage}{0.49\textwidth}
        \centering
        \includegraphics[width=0.99\textwidth]{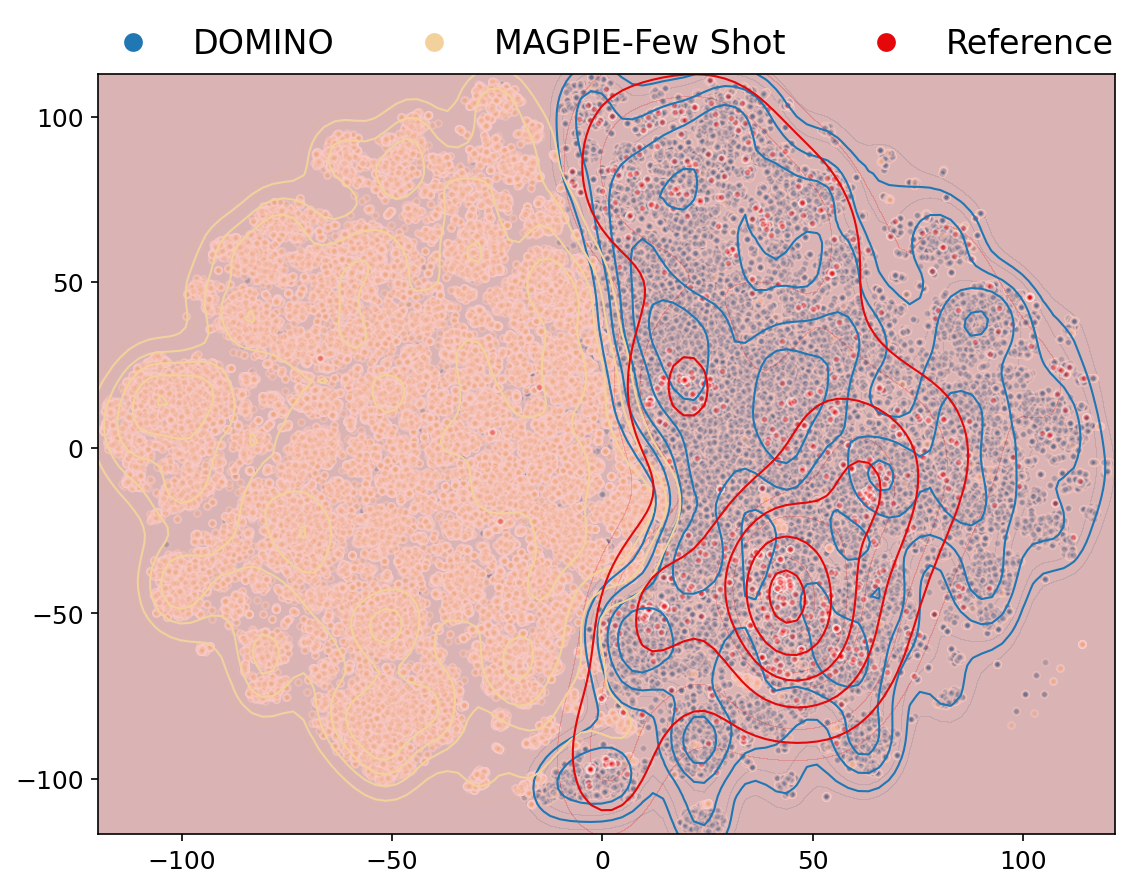}
        \caption{t-SNE of latent embeddings for synthetic and reference samples in \textit{Live Code Generation} domain.}
        \label{fig-tsne1}
    \end{minipage}
    \hfill
    \begin{minipage}{0.49\textwidth}
        \centering
        \includegraphics[width=0.99\textwidth]{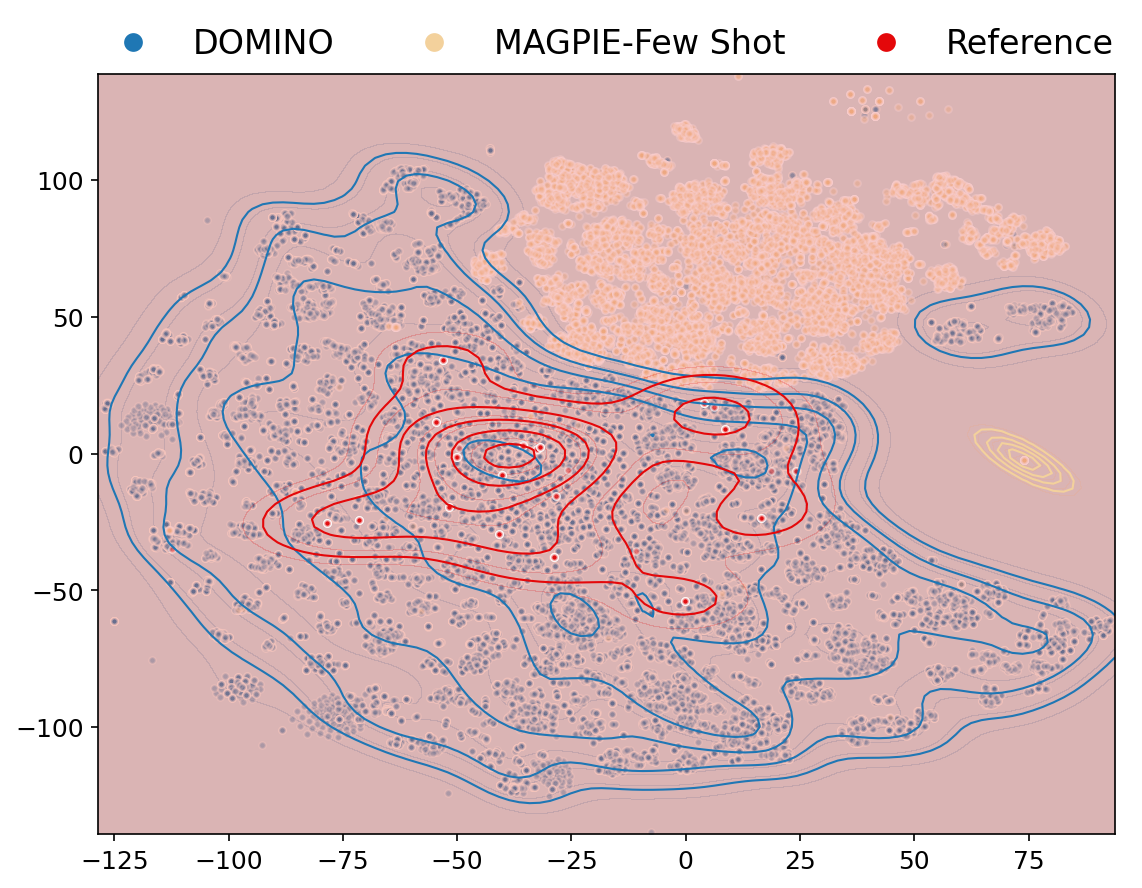}
        \caption{t-SNE of latent embeddings for synthetic and reference samples in \textit{Live Code Execution} domain.}
        \label{fig-tsne2}
    \end{minipage}
\end{figure}

\subsection{Main Results}
The main results are presented in Table \ref{tab:main}, with each section block corresponding to one LLM backbone. We can find that:
(i) Directly prompting the \textit{Instruction} version of an LLM consistently outperforms its \textit{Base} version across both domains, which is expected since instruction-tuned LLMs are explicitly aligned to follow system instructions.
(ii) When fine-tuned directly on the domain reference data (Reference SFT), the LLM demonstrates a certain level of domain adaptation in the Live Code Generation domain. However, in the Live Code Execution domain, we observe a performance drop, suggesting that direct fine-tuning is prone to overfitting, especially in domains where the LLM already performs well out of the box.
(iii) Both MAGPIE-Human and MAGPIE-Few Shot yield performance gains. Interestingly, MAGPIE-Few Shot slightly outperforms MAGPIE-Human, indicating that allowing the LLM to infer domain characteristics from a few reference samples is more effective than relying on manually written system prompts in unfamiliar domains. This highlights the LLM’s capacity to capture subtle and latent domain patterns that may be overlooked by humans.
(iv) Finally, \tool-Direct Domain and \tool achieve the best overall performance. Notably, \tool shows that synthetic domain-specific samples generated by the \textit{Instruction} version of an LLM can be used to fine-tune its \textit{Base} version, resulting in performance that even surpasses the original \textit{Instruction} version. This finding suggests that the domain adaptation capabilities of LLMs can be further enhanced through self-generated, domain-aligned data.

\subsection{Deeper Analysis}

\subsubsection{Visualizing Synthetic Sample Distribution.}
To better illustrate the relationship between synthesized and reference data distributions, we compare samples generated by \tool and MAGPIE-Few Shot. We use \textsc{CodeT5-Embedding}~\citep{wang2023codet5plus} to obtain semantic embeddings and visualize them via t-SNE~\citep{tsne}. As shown in Figure~\ref{fig-tsne1} and Figure~\ref{fig-tsne2}, \tool's synthesized samples are more widely dispersed in the embedding space and exhibit greater consistency with the distribution of reference samples. In contrast, the samples synthesized by MAGPIE-Few Shot are cluster tightly and show weaker alignment with reference samples. These results highlight \tool's ability to generate diverse, domain-consistent data that better captures underlying domain traits.

\begin{table}[t]
\small
\centering
\caption{Qualitative Analysis: From Sample Mimicry to Domain Abstraction.}
\begin{tabular}{l|l}
\toprule
\textbf{Source} & \textbf{Example Problem} \\
\midrule
\midrule
\makecell[c]{\textbf{Reference Sample}} & \makecell[l]{\textbf{k-avoiding array}: Given \texttt{n} and \texttt{k}, find the minimum sum of a \\ \texttt{k-avoiding} array of length \texttt{n} where no two distinct elements sum to \texttt{k}.}\\
\midrule
\makecell[c]{\textbf{Synthetic from $\mD$} \\ (Overfitted)} & \makecell[l]{\textbf{k-subsequence}: Given a string \texttt{n} of length \texttt{k}, find the number \\ of \texttt{k-subsequences}.} \\
\midrule
\makecell[c]{\textbf{Synthetic from $\mD^*$} \\ (Diverse but \\ In-Domain)} & \makecell[l]{1. \textbf{Graph Path}: Find the distance between two airports in a tree-like  \\ network. 2. \textbf{Arithmetic Subsequence}: Find the longest arithmetic \\ subsequence with a given \texttt{difference}.  3. \textbf{Palindrome Operations}: \\ Find the minimum operations to make a string of digits a palindrome.}   \\
\bottomrule
\end{tabular}
\label{tab:case}
\end{table}
\begin{table}[t]
\small
\centering
\caption{Results of \tool and the compared methods on the NLP tasks.}
\vspace{5pt}
\label{tab:if}
\begin{tabular}{c|c|cccc}
\toprule
        \multirow{2}{*}{\textbf{Methods}}  & \multirow{2}{*}{\textbf{IF Average}}  & \multicolumn{4}{c}{\textbf{Instruct Following  (IF)}} \\
        \cmidrule(r){3-6}    &  &  Paraphrase & Simplify  & Story Gen. & Summarize  \\ 
\midrule
\midrule
Qwen2.5-7B-Base      & 44.50 & 41.23 & 48.27 & 53.84 & 34.65 \\
Qwen2.5-7B-Instruct  & 51.91 & 50.62 & 60.83 & 54.58 & 41.61 \\
Reference SFT        & 45.78 & 44.07 & 49.42 & 52.29 & 37.36 \\
MAGPIE Few Shot      & 51.19 & 47.34 & 54.78 & 57.91 & 44.73 \\
\rowcolor{highlightcolor} \tool-Direct Domain & 53.78 & 52.49 & 58.37 & 57.91 & 46.38 \\
\rowcolor{highlightcolor} \textbf{\tool}      & \textbf{55.39} & \textbf{54.28} & \textbf{61.96} & \textbf{58.23} & \textbf{47.12} \\
\bottomrule
\end{tabular}
\label{tab:if}
\end{table}

\subsubsection{Interpretability of $D^*$.}
To better interpret the specific domain patterns captured by $\mD^*$, we present a case study using samples that exemplify the clear distinction between the standard synthetic set ($\mD$) and the proposed diverse set ($\mD^*$).
Critically, the synthetic samples from $\mD^*$ were specifically selected from regions of the t-SNE embedding space that are distant from the dense cluster representing $\mD$, visually confirming their diversity. The specific samples are provided in the Appendix~\ref{appendix_exper_interpret}. The qualitative comparison (concluded in Table~\ref{tab:case}) provides concrete evidence: Even with a relatively large reference set (713 samples for Live Code Generation), the model trained with only a sufficiency objective ($\mD$, from $\gL_1$) still overfits to superficial details. The addition of our contrastive disentanglement objective ($\gL_2$) is crucial for pushing the model beyond mere memorization. It forces the model to learn a minimal representation ($\mD^*$) that successfully performs the inductive step of abstracting core domain principles, leading to the generation of truly diverse and novel samples.





\subsubsection{Generalization of \tool to More Task Domains.}
To demonstrate \tool's effectiveness across a more diverse set of tasks, we have extended our experiments to the critical and well-defined \textbf{\textit{Instruction Following} domain}. 
The detailed synthesis procedure and experimental results are provided in Appendix~\ref{appendix:generalization} and Table~\ref{tab:if}. As shown, \tool consistently outperforms all baselines across the four subtasks, boosting the average score by 3.48 points over the strong instruction-tuned backbone and by 1.61 points over \tool-Direct Domain. These results reinforce the original findings and demonstrate that \tool's ability to learn and generalize from implicit domain characteristics is not limited to coding but extends to a diverse range of NLP tasks.

\subsubsection{Impact of Temperature $\gT$.}
In LLM-driven data synthesis, the temperature parameter controls the LLM's output diversity. To assess its impact on \tool, we vary $\gT$ from 0.2 to 1.0. As shown in Table \ref{table:temp}, \tool achieves relatively stable performance across different $\gT$ settings. This robustness is attributed to the strong guidance of the domain representation $\mD^*$, which effectively constrains generation to preserve domain characteristics as much as possible.

\begin{figure}[htbp]
    \centering
    \begin{minipage}{0.49\textwidth}
        \tabcaption{Results of varying $\gT$ on synthetic process in the Live Code Generation using \qwencoder-7B Instruction as the LLM backbone.}
        \vspace{6pt}
        \centering
        \setlength\tabcolsep{2.6pt} 
        \begin{tabular}{c|ccc}
        \toprule
        \multirow{2}{*}{\textbf{Temperature}}  & \multicolumn{3}{c}{\textbf{Live Code Generation}} \\
        \cmidrule(r){2-4}     &  Pass@1 & Pass@5  & Pass@10   \\ 
        \midrule
        \midrule
        $\gT$ = 0.2            & 16.35           & 22.88          & 24.55                       \\
        $\gT$ = 0.4                        & 16.77           & 24.67         & 27.54                         \\
        $\gT$ = 0.6                & 15.45            & 25.17          & 29.34                     \\  
        \rowcolor{highlightcolor} $\gT$ = 0.8                &   \textbf{17.31}           & \textbf{26.81}         & \textbf{29.94}                      \\
        $\gT$ = 1.0              & 16.53   & 25.95 & 29.34     \\ 
        \bottomrule
        \end{tabular}
        \label{table:temp}
    \end{minipage}\hfill
    \begin{minipage}{0.48\textwidth}
        \includegraphics[width=0.95\textwidth]{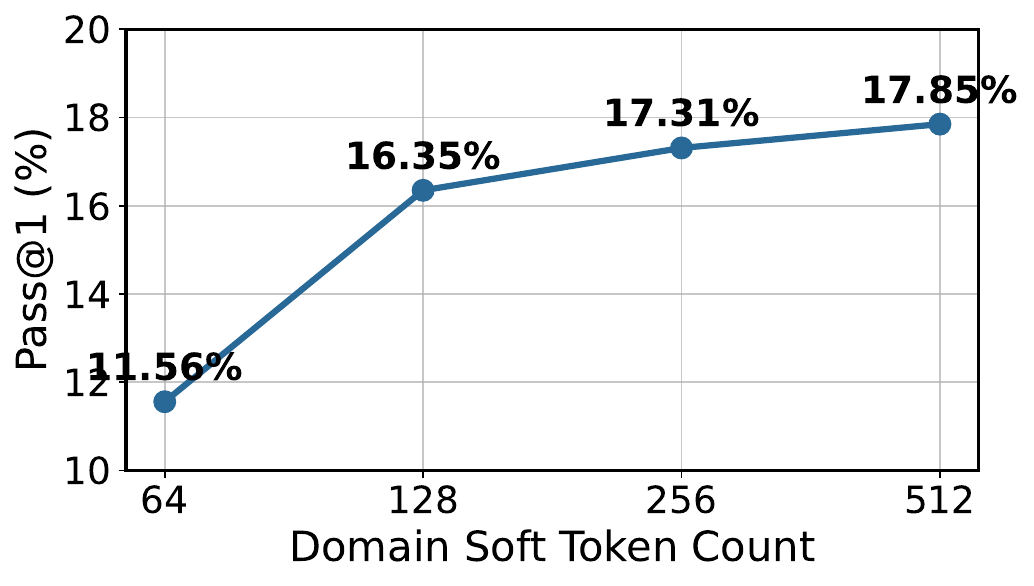}
        \vspace{-5pt}
        \figcaption{Results of the impact of domain soft token counts in Live Code Generation  using \qwencoder-7B Instruction as backbone.}
        \label{fig:count}
    \end{minipage}
\end{figure}

\subsubsection{Impact of Domain-level Soft Token Counts.}
In \tool, the domain-level soft tokens $[\vd^*_1,\cdots,\vd^*_k]$ are shared across all reference samples within the target domain and are designed to capture the domain’s underlying characteristics. The number of domain soft tokens, $k$, is a tunable hyperparameter that determines the capacity of this representation. A larger $k$ enables the encoding of more domain-specific features across a broader embedding space. To assess the impact of $k$, we vary it over the set $\{64, 128, 256, 512\}$. As shown in Figure \ref{fig:count}, performance consistently improves with larger values of $k$, suggesting that higher-capacity domain representations better capture fine-grained domain characteristics and provide stronger guidance for synthetic data generation.

\begin{table}[h]
    \centering
            \begin{minipage}{0.50\textwidth}
        \includegraphics[width=0.95\textwidth]{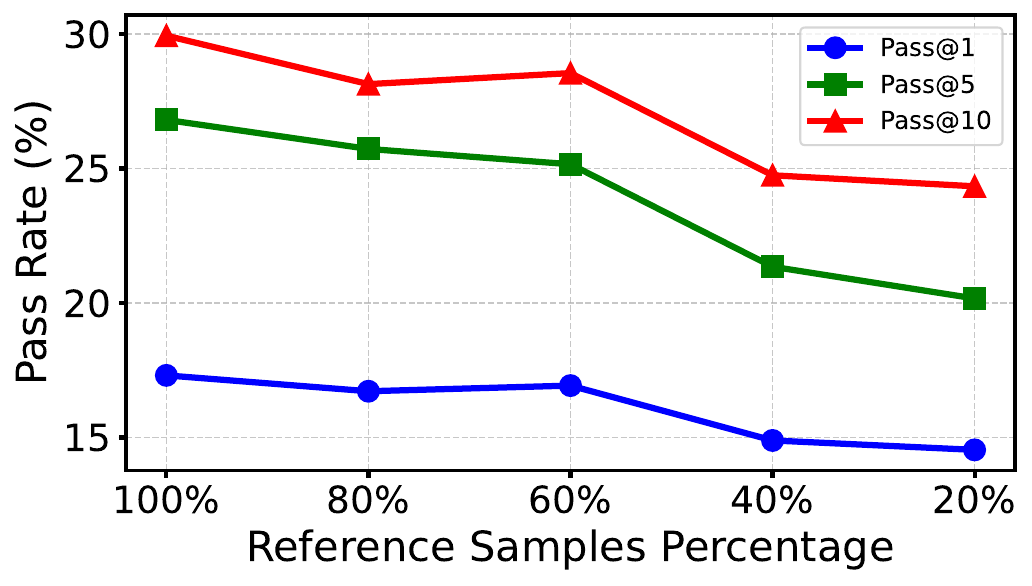}
        \vspace{-4pt}
        \figcaption{Impact of reference sample percentage in Live Code Generation domain using \qwencoder-7B Instruction as the backbone.}
        \label{fig:percentage}
        \end{minipage}
        \hfill
    \begin{minipage}{0.49\textwidth}
        \caption{Results of varying $\lambda$ in \tool in the Live Code Generation domain using \qwencoder-7B Instruction as LLM backbone. }
        \vspace{5pt}
        \centering
        \setlength\tabcolsep{2.6pt} 
        \begin{tabular}{c|ccc}
        \toprule
        \multirow{2}{*}{Hyperparameter \textbf{$\lambda$}}  & \multicolumn{3}{c}{\textbf{Live Code Generation}} \\
        \cmidrule(r){2-4}     &  Pass@1 & Pass@5  & Pass@10   \\ 
        \midrule
        \midrule
        $\lambda$ = 0.25            & 15.38           & 22.44          & 24.38                       \\
       $\lambda$ = 0.5                        & 15.95           & 22.88         & 24.77                         \\
        \rowcolor{highlightcolor} $\lambda$ = 1               & 17.31            & \textbf{26.81}          & \textbf{29.94}                     \\  
        $\lambda$ = 2               &   \textbf{17.58}           & 25.75         & 28.58                      \\
        $\lambda$ = 4              & 16.58   & 26.38 & 28.89     \\ 
        \bottomrule
        \end{tabular}
        \label{table:lambda}
    \end{minipage}
\end{table}

\subsubsection{Impact of Reference Data Percentage.}
In \tool, the domain characteristics are inserted into domain soft tokens by leveraging reference data. To evaluate the impact of reference data quantity, we vary the proportion of the original reference set from 100\% to 20\%. As shown in Figure \ref{fig:percentage}, reducing the proportion of reference data consistently degrades performance. This aligns with the intuition: more reference samples provide a more comprehensive view of the domain, enabling more effective representation learning. 
Furthermore, we also evaluate the impact of reference data quantity on the Live Code Execution task; the corresponding results and analysis are presented in Appendix~\ref{appendix_exper_percent}.

\subsubsection{Impact of Hyperparameter $\lambda$.}
In the original \tool optimization objective, the hyperparameter $\lambda$ controls the trade-off between maximizing data likelihood and promoting domain representation disentanglement. To evaluate its impact on performance, we vary $\lambda$ across the set $\{0.25, 0.5, 1, 2, 4\}$. As shown in Table \ref{table:lambda}, setting $\lambda \geq 1$ leads to a noticeable performance improvement compared to $\lambda < 1$. This indicates that assigning greater weight to disentanglement helps the LLM learn a more minimal and generalizable domain representation by mitigating overfitting to individual samples.


\subsubsection{Further Analysis.}
1. \textbf{Case studies} provide an intuitive representation of domain-specific synthetic samples in Appendix~\ref{appendix:6}; 2. A comparison of \textbf{synthetic against reference samples, using equal numbers}, shown in Appendix~\ref{appendix_exper_same}; 3. \textbf{More samples for MAGPIE-Few Shot} shown in Appendix~\ref{appendix_exper_magpie}.

\section{Conclusion}

In this work, we propose \tool, a method for domain-specific data synthesis under implicit supervision. By learning minimal sufficient domain representations from reference data, \tool enables scalable domain-specific data synthesis without manual prompt engineering, making it possible to generate a virtually unlimited number of domain-aligned samples. Theoretical and empirical results show that \tool can capture essential characteristics of the domain, generate diverse, domain-aligned samples even without explicit domain definitions or any prior knowledge.

\section{Acknowledgements}
This work was supported by the NGICS Next-Generation Industrial Control System Technology Research, and the Fundamental Research Funds for the Central Universities (Zhejiang University NGICS Platform). This research was also supported by Ant Group.



\bibliography{references}

\newpage
\appendix
\section{Appendix}
\subsection{Theoretical Guarantee}
\subsubsection{The Proof of Proposition 1.}
\label{appendix_theo_1}
\textbf{\textit{Proposition}}: Given $\mX^{(i)} \in \mX^{(1:n)}$, $\gL_2$ directly maximizes $I(\mS^{(i)},\mX^{(i)}|\mD^*)$, ensuring that $\mS^{(i)}$ captures unique information about $\mX^{(i)}$.
\begin{proof}
By definition:
\begin{align}
    I(\mS^{(i)};\mX^{(i)}|\mD^*) &= \E_{p(\mD^*,\mS^{(i)},\mX^{(i)})}\bigg[\log\frac{p(\mS^{(i)}|\mD^*,\mX^{(i)})}{p(\mS^{(i)}|\mD^*)}\bigg],
\end{align}
where $\E_{p(\mD^*,\mS^{(i)},\mX^{(i)})}$ represents the expectation over the distribution $p(\mD^*,\mS^{(i)},\mX^{(i)})$.
Applying Bayes' theorem, we can express the numerator as:
\begin{align} \label{eq:prop1_numer}
    p(\mS^{(i)}|\mD^*,\mX^{(i)}) = \frac{p(\mX^{(i)}|\mD^*,\mS^{(i)})p(\mS^{(i)}|\mD^*)}{p(\mX^{(i)}|\mD^*)}.
\end{align}
On the other hand, the denominator can be expanded as:
\begin{align} \label{eq:prop1_denomin}
    p(\mS^{(i)}|\mD^*) = \sum_{j=1}^n p(\mX^{(j)}, \mS^{(i)}|\mD^*) = \sum_{j=1}^n p(\mX^{(j)}|\mD^*,\mS^{(i)})p(\mS^{(i)}|\mD^*).
\end{align}
\begin{align}
    I(\mS^{(i)};\mX^{(i)}|\mD^*) &= \E_{p(\mD^*,\mS^{(i)},\mX^{(i)})}\bigg[\log\frac{p(\mX^{(i)}|\mD^*,\mS^{(i)})}{p(\mX^{(i)}|\mD^*)\sum_{j=1}^n p(\mX^{(j)}|\mD^*,\mS^{(i)})}\bigg], \notag \\
    &\geq \E_{p(\mD^*,\mS^{(i)},\mX^{(i)})}\bigg[\log\frac{p(\mX^{(i)}|\mD^*,\mS^{(i)})}{\sum_{j=1}^n p(\mX^{(j)}|\mD^*,\mS^{(i)})}\bigg], \notag \\
    &= \E_{p(\mD^*,\mS^{(i)},\mX^{(i)})}\bigg[\log\frac{p(\mX^{(i)}|\mD^*,\mS^{(i)})}{p(\mX^{(i)}|\mD^*,\mS^{(i)}) + \sum_{j\neq i} p(\mX^{(j)}|\mD^*,\mS^{(i)})}\bigg], \notag \\
    & \geq -\gL_2
\end{align}
The inequality holds because $p(\mX^{(i)}|\mD^*)$ is a categorical distribution (i.e. $p(\mX^{(i)}|\mD^*) \leq 1$) and \\ $p(\mX^{(i)}|\mD^*,\mS^{(i)})$ is always non-negative. So, minimizing $\gL_2$ directly maximizes $I(\mS^{(i)},\mX^{(i)}|\mD^*)$.
\end{proof}

\subsubsection{The Proof of Proposition 2.}
\label{appendix_theo_2}
\textbf{\textit{Proposition}}: $\gL_2$ directly minimizes $I(\mS^{(i)};\mD^*)$, promoting the disentanglement between $\mS^{(i)}$ and $\mD^*$.
\begin{proof}
Given the limited capacity (or size) of $\mS^{(i)}$, the total mutual information $I(\mS^{(i)};\mX^{(i)})$ is bounded. In practice, the combined length of the domain-level prompt $\mD^*$ and the sample-level prompt $\mS^{(i)}$ is generally smaller than the length of the observed data $\mX^{(i)}$, thus, this assumption is valid. By applying the chain rule of mutual information, we have:
\begin{align}
    I(\mS^{(i)};\mX^{(i)}) = I(\mS^{(i)};\mD^*) + I(\mS^{(i)};\mX^{(i)}|\mD^*).
\end{align}
Since $\gL_2$ maximizes $I(\mS^{(i)};\mX^{(i)}|\mD^*)$ as proven in \textbf{Proposition}~\ref{prop:max_ISX_D} and the total $I(\mS^{(i)};\mX^{(i)})$ is bounded, it necessitates the minimization of $I(\mS^{(i)};\mD^*)$ to maintain the equation's integrity.
\end{proof}

\subsubsection{The Proof of Proposition 3.}
\label{appendix_theo_3}
\textbf{\textit{Proposition}}: The optimization objective $\gL = \gL_1 + \lambda \gL_2$ serves as a tractable proxy for learning a representation $\mD^*$ that approximates the information-theoretic properties of a minimal sufficient statistic. Specifically, minimizing $\gL$ jointly encourages:
    
1. \textbf{Sufficiency}: The maximization of mutual information $I(\mD^*; \mX)$ , ensuring $\mD^*$ captures relevant domain information. 

2. \textbf{Minimality}: The minimization of mutual information $I(\mD^*; \mS)$  , ensuring  $\mD^*$ is disentangled from sample-specific information.

\begin{proof}
The proof analyzes the two components of the loss function and assume that the empirical loss, calculated over the reference samples, is a reasonable estimator of the expected loss over the true data distribution.

\textbf{Part 1: Minimizing $\gL_1$ Encourages Sufficiency.}

The first loss term, $\gL_1$, is the negative log-likelihood of the data $\mX$ given the domain representation $\mD^*$. In expectation, minimizing this empirical loss corresponds to minimizing the conditional entropy $H(\mX|\mD^*)$:
\begin{align}
    \min(\gL_1) \implies \min H(\mX|\mD^*)
\end{align}

The mutual information $I(\mD^*; \mX)$ is defined as $I(\mD^*; \mX) = H(\mX) - H(\mX|\mD^*)$. Since the data entropy $H(\mX)$ is a constant with respect to the model parameters, minimizing the conditional entropy $H(\mX|\mD^*)$ is equivalent to maximizing the mutual information $I(\mD^*; \mX)$.
\begin{align}
    \min H(\mX|\mD^*) \iff \max I(\mD^*; \mX)
\end{align}

Thus, minimizing $\gL_1$ drives the representation $\mD^*$ to capture as much information as possible about the data $\mX$, fulfilling the sufficiency criterion.

\textbf{Part 2: Minimizing $\gL_2$ Encourages Minimality.}

The second loss term, $\gL_2$, is a contrastive loss. As established in \textbf{Proposition}~\ref{prop:max_ISX_D}, minimizing $\gL_2$ maximizes the conditional mutual information $I(\mS; \mX|\mD^*)$. To understand how this promotes minimality, we use the chain rule of mutual information:
\begin{align}
    I(\mS; \mD^*) = I(\mS; \mX) - I(\mS; \mX|\mD^*)
\end{align}

The sample-specific representation $\mS$ has a fixed information capacity, bounded by its entropy $H(\mS)$, which is determined by the model's architecture. This means $I(\mS; \mX) \leq H(\mS)$. This fixed capacity creates an ``information budget'' for what $\mS$ can learn about $\mX$. This budget is partitioned between information that is also in $\mD^*$ (the shared information, $I(\mS; \mD^*)$) and information that is unique to $\mS$ given $\mD^*$ (the unique information, $I(\mS; \mX|\mD^*)$).
Since the total budget $I(\mS;\mX)$ is bounded by the constant $H(\mS)$, and the objective of minimizing $\gL_2$ is to maximize $I(\mS; \mX|\mD^*)$, the optimization is strongly incentivized to allocate as much of the budget as possible to $I(\mS; \mX|\mD^*)$. This necessarily encourages the minimization of $I(\mS; \mD^*)$. By the symmetry of mutual information, minimizing $I(\mS; \mD^*)$ is equivalent to minimizing $I(\mD^*; \mS)$. This drives the domain representation $\mD^*$ to be uninformative about the sample-specific details $\mS$, thus satisfying the \textbf{minimality} criterion.
\end{proof}

\subsubsection{The Proof of Proposition 4.}
\label{appendix_theo_4}

\textbf{\textit{Proposition}}:
Let $p_{D}$ and $p_{D^*}$ denote the distributions $p(\mX | \mD)$ (from vanilla prompt tuning) and $p(\mX|\mD^*)$ (from minimal sufficient, disentangled prompt tuning via $\mathcal{L}_1 + \lambda\mathcal{L}_2$), respectively. For any $\epsilon \in (0, 1/e)$, define the $\epsilon$-support of $p$, $\mathrm{supp}_{\epsilon}(p) = \{x : p(x) > \epsilon\}$, with cardinality $S_p^\epsilon=|\mathrm{supp}_{\epsilon}(p)|$. Define the uniformity gap for $p$ as $\delta_p = \log S_p^\epsilon - H(p)$, where $H(p)$ is the entropy of $p$. If
\begin{align}
    H(p_{D^*}) - H(p_{D}) > \delta_{p_{D}} + \epsilon \log \frac{1}{\epsilon}
\end{align}
then $S_{p_{D^*}}^\epsilon > S_{p_{D}}^\epsilon$; that is, the $\epsilon$-support of $p(\mX|\mD^*)$ is strictly larger than that of $p(\mX|\mD)$.
\begin{proof}
By definition,
$\log S_{p}^\epsilon = H(p) + \delta_{p}$
and, by a standard entropy-support bound, for any $p$,
$H(p) \leq \log S_p^\epsilon + \epsilon \log\frac{1}{\epsilon} \implies -\delta_p \leq \epsilon\log\frac{1}{\epsilon}.$
Thus,
\begin{align}
    \log S_{p_{D^*}}^\epsilon - \log S_{p_{D}}^\epsilon
&= [H(p_{D*}) - H(p_{D})] + [\delta_{p_{D^*}} - \delta_{p_{D}}] \\
&> [\delta_{p_{D}} + \epsilon \log \frac{1}{\epsilon}] + [\delta_{p_{D^*}} - \delta_{p_{D}}] \\
&= \delta_{p_{D^*}} + \epsilon \log \frac{1}{\epsilon} > - \epsilon \log\frac{1}{\epsilon} + \epsilon\log\frac{1}{\epsilon} = 0,
\end{align}

since,
\begin{align}
    -\delta_{p_{D^*}} \leq \epsilon\log\frac{1}{\epsilon}  \implies\delta_{p_{D^*}} \geq -\epsilon\log\frac{1}{\epsilon} \\ 
\end{align}

Therefore, 
\begin{align}
    \log S_{p_{D^*}}^\epsilon > \log S_{p_{D}}^\epsilon, i.e., S_{p_{D^*}}^\epsilon > S_{p_{D}}^\epsilon
\end{align}
 This closes the proof.
\end{proof}

\subsection{Experimental Details}
\subsubsection{Target Domain Data Statistics.}
\label{appendix_exper_data}
We select two distinct domains from the recently proposed dynamic benchmark LiveCodeBench~\citep{livecodebench}: Live Code Generation and Live Code Execution. To ensure a fair and realistic evaluation that mimics a real-world ``live'' setting, we apply a consistent temporal cutoff: specifically, we use all domain samples released before the final update's timestamp as reference data (input–output pairs), while every sample that appears in that final update constitutes the test set.


For the \textbf{Live Code Generation} domain, the data collected before \textit{September 2024} as reference data and data collected after that as the test set.

For the \textbf{Live Code Execution} domain, the data collected before \textit{October 2023} as reference data (after deduplication) and data collected after that as the test set.

\begin{table}[h]
\centering
\label{table:dataset}
\caption{ Statistics of the two target domains: Live Code Generation and Live Code Execution.}
\begin{tabular}{l|c|c|c}
\toprule
                     & Reference Samples & Test Set & Temporal Cutoff Point \\ 
                     \midrule
                     \midrule
\textbf{Live Code Generation} & 713                & 167       & September 2024        \\ 
\midrule
\textbf{Live Code Execution}  & 66                & 108       & October 2023          \\ 
\bottomrule
\end{tabular}
\end{table}

\textbf{Target Domain Instances}. We present detailed examples of the reference data for the two target domains in Figure \ref{fig:f4} and Figure \ref{fig:f5}.

\textbf{Evaluation:} For evaluation, we use the official evaluation scripts from LiveCodeBench and report the corresponding Pass@k metrics. The inference prompts used for the two target domains are shown in Figure~\ref{fig:inference_generation} and Figure~\ref{fig:inference_execution}.

\subsubsection{Case Study: Interpretability of $D^*$.}
\label{appendix_exper_interpret}
To better interpret the specific domain patterns captured by $\mD^*$, we present a case study using samples that exemplify the clear distinction between the standard synthetic set ($\mD$) and our proposed diverse set ($\mD^*$). Critically, the synthetic samples from $\mD^*$ are specifically selected from regions of the t-SNE embedding space that are distant from the dense cluster representing $\mD$, visually confirming their diversity. The specific samples, provided in Figure~\ref{fig:f10}, reveal two key findings:
\begin{itemize}[leftmargin=*,itemsep=2pt,topsep=1pt,parsep=2pt]
    \item \textbf{Overfitting in $\mD$}: The sample from $\mD$ demonstrates \textbf{sample-level mimicry}. It not only adopts the high-level structure (problem statement $\rightarrow$ examples $\rightarrow$ constraints) but also copies superficial details like variable names (\texttt{n}, \texttt{k}) and problem naming conventions (\texttt{k-xxx}), indicating it has overfitted to specific instances.
    \item \textbf{Domain Abstraction in $\mD^*$}: In contrast, samples from $\mD^*$ capture the abstract \textbf{domain structure} of a competitive programming problem while introducing genuine topical diversity. They explore entirely new problem types—spanning graph theory, dynamic programming, and string algorithms—that are distinct from the reference sample.
\end{itemize}

\begin{figure}[h]
    \centering
    \includegraphics[width=1.0\linewidth]{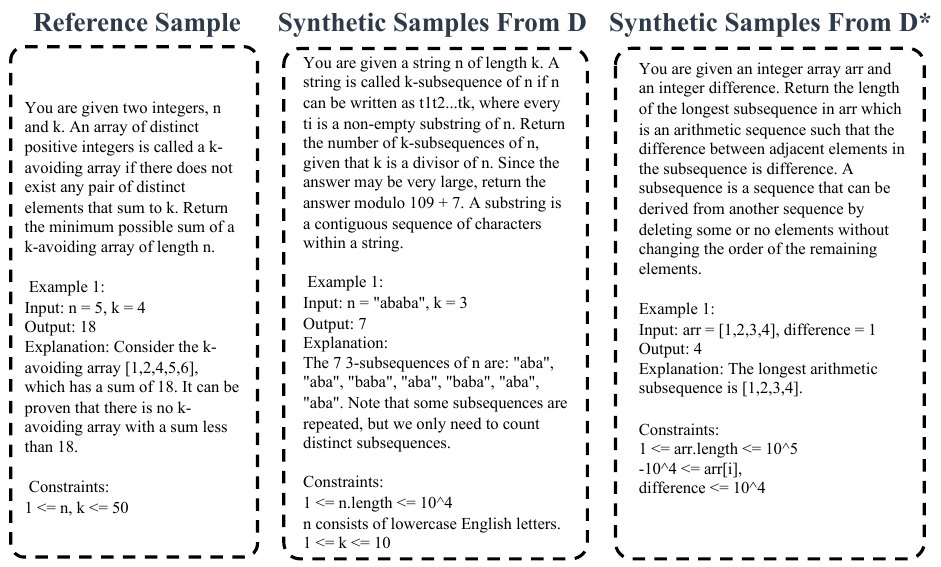}
    \caption{Synthetic Samples from $\mD$ and $\mD^*$.}
    \label{fig:f10}
\end{figure}

\subsubsection{Generalization of \tool to More Task Domains.}
\label{appendix:generalization}
To demonstrate \tool's effectiveness across a more diverse set of tasks, we extended our experiments to the critical and well-defined \textbf{\textit{Instruction Following} domain}. We chose this domain for two key reasons:
\begin{itemize}[leftmargin=*,itemsep=2pt,topsep=1pt,parsep=2pt]
    \item \textbf{Diverse Subtasks}: It comprises a variety of core language tasks (paraphrasing, summarization, simplification, story generation), allowing us to test \tool's versatility.
    \item \textbf{Challenging, Unseen Data}: We use LiveBench~\citep{livebench}, a dynamic benchmark whose data is continuously updated. This ensures our test set is free from contamination and that even SOTA models do not achieve perfect scores, providing a meaningful test of generalization. Proving that \tool's synthetic data can improve performance on such a challenging, live benchmark is a strong testament to its value.
\end{itemize}


Following the protocol from main experiment of Live Code Generation and Live Code Execution, we use the temporal cutoff \emph{(June 30, 2024)} to create reference and test sets. The reference set and test set each contain 200 samples. We then use \tool with a Qwen2.5-7B-Instruct~\citep{qwen2.5} backbone to synthesize 40K samples for fine-tuning. The results, evaluated using the official LiveBench suite, are presented in Table
~\ref{tab:if}. As shown, \tool consistently outperforms all baselines across the four subtasks, boosting the average score by 3.48 points over the strong instruction-tuned backbone and by 1.61 points over the standard prompt-tuning approach (\tool-Direct Domain). These results reinforce the original findings and demonstrate that \tool's ability to learn and generalize from implicit domain characteristics is not limited to coding but extends to a diverse range of NLP tasks.

\subsubsection{Comparison of Synthetic Samples vs. Reference Samples (Equal Size).}
\label{appendix_exper_same}
To further compare the performance gap when using the same number of \tool's synthetic samples versus original reference samples, we conducted an SFT experiment comparing the two. As shown in Table~\ref{tab:same}, the \tool's synthetic data achieves performance very close to the original data. This small gap is expected, as the original reference samples represent the ground-truth ``gold'' data for the domain. This result validates the high per-sample quality of \tool's generated sample.

\begin{table}[htbp]
\centering
\caption{Performance Comparison of Reference vs. \tool's Synthetic Samples (Equal Size).}
\begin{tabular}{cccc}
\toprule
\textbf{Task} & \textbf{Metric} & \textbf{Reference Samples} & \textbf{\tool's Synthetic Samples} \\
\midrule
\midrule
\multirow{3}{*}{\textbf{Live Code Generation}}
 & Pass@1  & 13.47 & 12.69\\
 & Pass@5  & 21.59 & 20.73\\
 & Pass@10 & 23.95 & 23.35\\
\midrule
\textbf{Live Code Execution} & Pass@1 & 36.11 & 34.26\\
\bottomrule
\end{tabular}
\label{tab:same}
\end{table}


\subsubsection{Impact of Reference Data Percentage on Live Code Execution.}
\label{appendix_exper_percent}
To evaluate the impact of reference data quantity on the Live Code Execution domain, we mirror the methodology of Figure \ref{fig:percentage} (Live Code Generation) using the \qwencoder-7B Instruct backbone. 
    
We vary the proportion of the reference set from 100\% down to 20\% and the results are shown in Figure~\ref{fig:ref-per2}. Figure~\ref{fig:ref-per2} confirms the conclusion:  reducing the amount of in-domain reference data leads to a corresponding drop in performance. The results also reveal a performance plateau, with almost no improvement from 80\% to 100\% of the reference data. This reinforces the central claim by suggesting that while a sufficient quantity of reference data is crucial for high performance, gains diminish once a point of saturation is reached. Determining this optimal quantity across different domains remains a compelling question for future work.

\begin{figure}[h]
    \centering
    \includegraphics[width=0.49\textwidth]{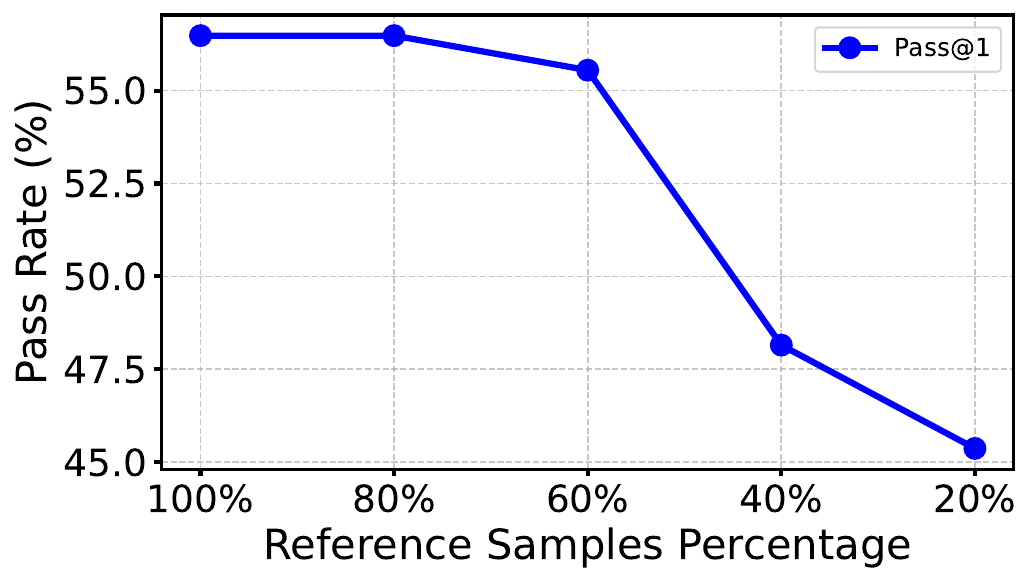} 
    \vspace{-6pt}
    \caption{Result of the impact of reference sample percentage in Live Code Execution domain using \qwencoder-7B Instruction as backbone.}
    \label{fig:ref-per2}
\end{figure}

\subsubsection{Impact of more reference samples for MAGPIE.}
\label{appendix_exper_magpie}
In our comparative method, MAGPIE-Few shot, we originally randomly selected three reference samples to include in the prompt for domain representation summarization using the LLM. This setup considered that each reference sample typically contains several hundred tokens. To further investigate whether providing more reference samples to MAGPIE method would lead to performance gains, we conducte the corresponding experiment, with results presented in Table~\ref{tab:magpie_more}.

As shown in Table~\ref{tab:magpie_more}, the MAGPIE performance remains comparable when using 3 or 5 few-shot samples. However, increasing the count to 10 severely challenges the LLM's ability to extract domain-specific information from the extended context. Our empirical evidence indicates that this larger few-shot count actually compromises performance. These results confirm that additional reference samples do not benefit MAGPIE; rather, they have a detrimental effect.

\begin{table}[h]
\caption{Results of more reference samples for MAGPIE-Few shot.}
\centering
\begin{tabular}{lccc}
\toprule
 \textbf{Live Code Generation} & \textbf{Pass@1} & \textbf{Pass@5} & \textbf{Pass@10} \\
\midrule
\midrule
MAGPIE Few-Shot = 3  & 14.36 & 20.98 & 24.23 \\
MAGPIE Few-Shot = 5  & 13.89 & 21.21 & 23.35 \\
MAGPIE Few-Shot = 10 & 11.81 & 16.82 & 18.39 \\
\bottomrule
\end{tabular}
\label{tab:magpie_more}
\end{table}

\subsubsection{Baseline Prompts.}
\label{appendix_exper_baseline_prompts}
We present the system prompts used by MAGPIE-Human and MAGPIE-Few Shot for the two main target domains in the Figure~\ref{fig:magpie-few-shot-execution}, Figure~\ref{fig:magpie-few-generation}, Figure~\ref{fig:magpie-human-execution} and Figure~\ref{fig:magpie-human-generation}.

\subsubsection{Post-processing.}
\label{appendix_exper_postprocessing}
For the input sequences synthesized by \tool and other methods, we apply a unified filtering mechanism to remove low-quality samples. The filtering prompt is shown in Figure~\ref{fig:filter-generation} and Figure~\ref{fig:filter-execution}. We retain only synthetic input sequences of \textit{``Excellent''} quality. \textit{It is important to note that the same LLM backbone used for synthesizing the domain-specific sample inputs is also used for both filtering and generating the corresponding output sequences.}

\subsubsection{Domain Adaptation Configurations.}
\label{appendix_exper_train_configs}
We fine-tuned the \opencoder-8B Base, \qwencoder-7B Base, and \qwencoder-14B Base models on the synthesized domain-specific samples using \textsc{Llama-Factory}~\citep{zheng2024llamafactory}, on a server equipped with 8$\times$NVIDIA A100 80GB GPUs. During fine-tuning, we applied LoRA~\citep{hu2022lora} with \texttt{lora\_rank=8} and \texttt{lora\_target=all}. All experiments used the AdamW~\citep{loshchilov2018decoupled} optimizer with an initial learning rate of 5e-5. Detailed training parameters are shown in Table \ref{tab:sft}.


\begin{table}[t]
\small
\centering
\caption{The hyper-parameters for supervised fine-tuning.}
\begin{tabular}{ll}
\toprule
\textbf{Hyper-parameter} & \textbf{Value} \\ \midrule
Learning Rate & $5 \times 10^{-5}$ \\
Number of Epochs & $2$ \\
Number of Devices & $8$ \\
Per-device Batch Size & $3$ \\
Gradient Accumulation Steps & $1$ \\
Effective Batch Size & $24$ \\
Optimizer & \texttt{Adamw} with $\beta s=(0.9,0.999)$ and $\epsilon=10^{-8}$\\
Learning Rate Scheduler & \texttt{linear} \\
Warmup Steps & $0$ \\
Max Sequence Length  & $4096$ \\ \bottomrule
\end{tabular}
\label{tab:sft}
\end{table}

\subsubsection{A Case Study of Domain-Specific Synthetic Samples.}
\label{appendix:6}
From Figure~\ref{fig:f6} and Figure~\ref{fig:f7}, it can be seen that the domain-specific samples synthesized by \tool are very similar to the domain reference samples. Although it is difficult to precisely describe this domain similarity using natural language, they exhibit consistency in certain features, such as structural characteristics.

\subsection{Limitations}
\label{appendix:7}
Our proposed method, \tool, encodes domain-specific patterns as minimal sufficient representations based on reference samples. To be effective, this process relies on the assumption that \textit{the reference samples are representative of the target domain}. If the reference data is not representative of the domain, the resulting domain representation $\mD^*$ will fail to capture the true characteristics of the domain, ultimately limiting the quality and generalizability of the synthesized data.

\subsection{Future Work}
In the methodology, while the sample-specific representations $\mS^{(i)}$ are discarded during synthesis, they could potentially serve tasks such as retrieving the most similar reference sample for a given query, which we leave for future work. We also plan to investigate \tool's robustness on mixed-domain reference sets. We hypothesize that the contrastive disentanglement mechanism may naturally encourage the domain representation $\mD^*$ to focus on the most prevalent shared patterns, implicitly filtering out outlier noise, though this requires further study. Furthermore, the learned domain representation $\mD^*$ captures domain-level characteristics and can serve as an implicit classifier for data selection, analogous to LAMDAS~\citep{wu2025lamdasllmimplicitclassifier}, which we leave for future investigation.

\begin{figure}[h]
    \centering
    \includegraphics[width=0.9\linewidth]{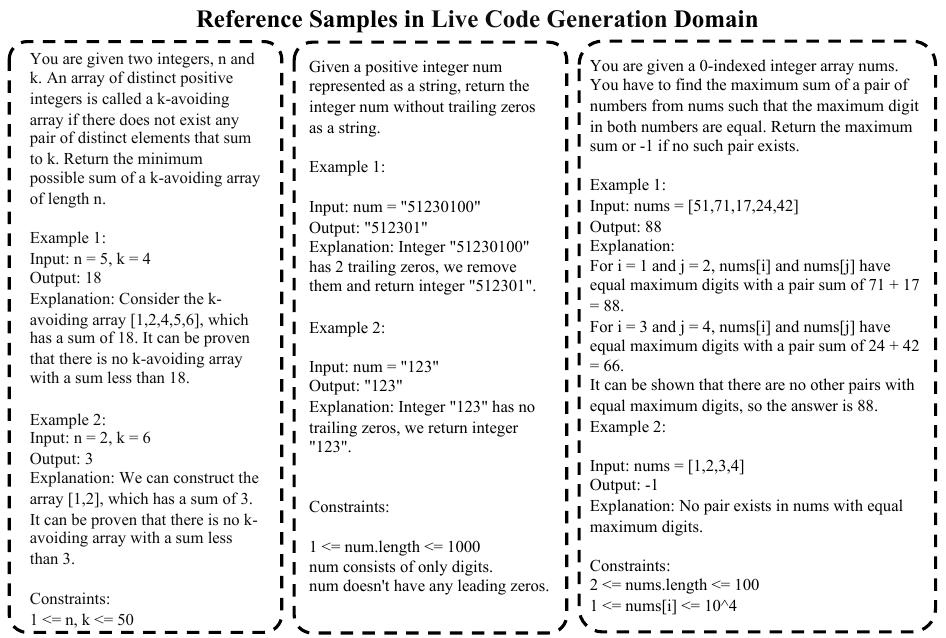}
    \caption{Detailed examples of the reference data in \textit{Live Code Generation} domain.}
    \label{fig:f4}
\end{figure}

\begin{figure}[h]
    \centering
    \includegraphics[width=0.8\linewidth]{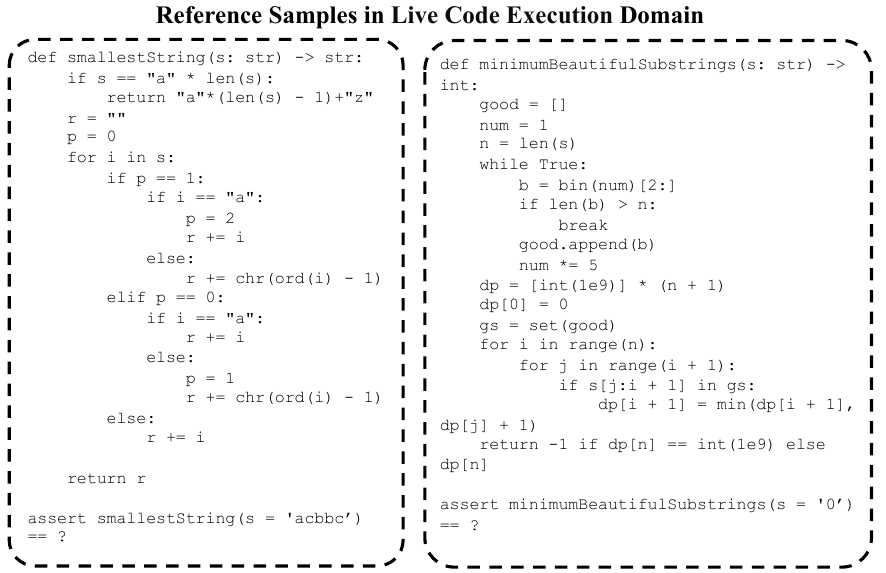}
    \caption{Detailed examples of the reference data in \textit{Live Code Execution} domain.}
    \label{fig:f5}
\end{figure}

\begin{figure}
    \centering
    \includegraphics[width=1.0\linewidth]{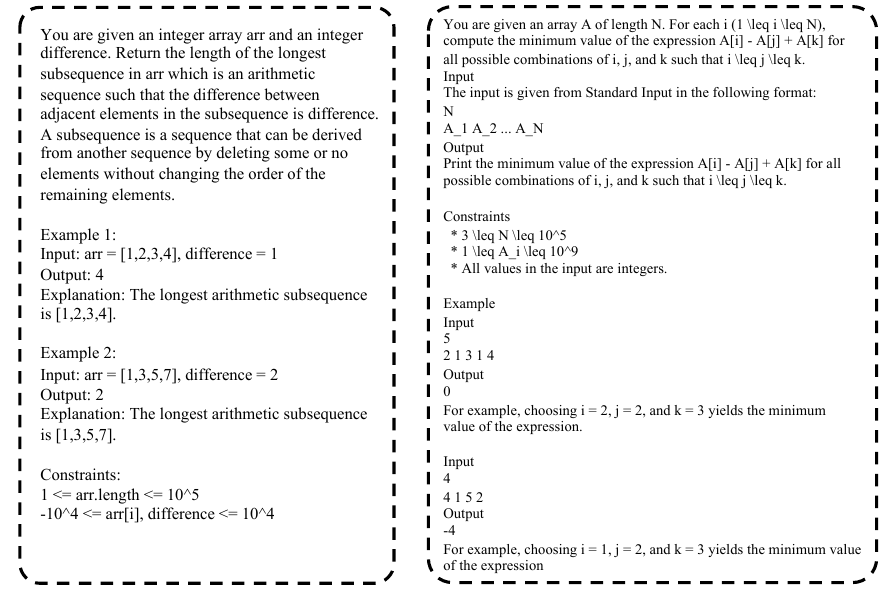}
    \caption{Samples synthesized by \tool in the \textit{Live Code Generation} domain.}
    \label{fig:f6}
\end{figure}

\begin{figure}
    \centering
    \includegraphics[width=1.0\linewidth]{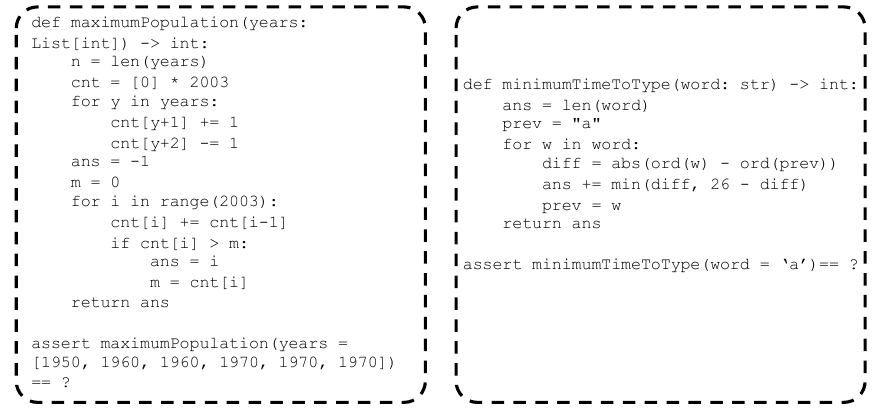}
    \caption{Samples synthesized by \tool in the \textit{Live Code Execution} domain.}
    \label{fig:f7}
\end{figure}

\begin{figure}[h]
    \centering
\begin{codebox}[Prompt for MAGPIE-Few Shot in the Live Code Execution Domain]
\begin{Verbatim}[fontsize=\small]
### Characteristics of the current coding task: 

1. The python function involves array manipulation and logical operations. The functions typically take 
arrays or lists as input and perform operations to derive or validate certain properties of these arrays.  
2. The function input also showcase the use of assertions to validate the correctness of the functions.
\end{Verbatim}
\end{codebox}
\caption{Prompt for MAGPIE-Few Shot in the Live Code Execution domain.}
\label{fig:magpie-few-shot-execution}
\end{figure}

\begin{figure}[h]
    \centering
\begin{codebox}[Prompt for MAGPIE-Few Shot in the Live Code Generation Domain]
\begin{Verbatim}[fontsize=\small]
### Characteristics of the current coding task:

1. **Problem Description**: a few sentences about the problem.
2. **Input Constraints**: Each test case has specific constraints on the size of the input, such as the 
number of elements in an array, the length of a string, or the range of values in an array.
3. **Output Format**: The output should match the number of test cases, with each result
corresponding to the input test case.
4. **Constraints on Operations**: There may be constraints on the number of operations that can be 
performed, such as performing an operation at most once or within a specific range.
\end{Verbatim}
\end{codebox}
\caption{Prompt for MAGPIE-Few Shot in the Live Code Generation domain.}
\label{fig:magpie-few-generation}
\end{figure}

\begin{figure}[h]
    \centering
\begin{codebox}[Prompt for MAGPIE-Human in the Live Code Execution Domain]
\begin{Verbatim}[fontsize=\small]
### Characteristics of the current coding task:

It begins with a Python function that performs the required operations, followed by an example input 
provided after the function. The input example typically starts with an assert statement.
\end{Verbatim}
\end{codebox}
\caption{Prompt for MAGPIE-Human in the Live Code Execution domain.}
\label{fig:magpie-human-execution}
\end{figure}

\begin{figure}[h]
    \centering
\begin{codebox}[Prompt for MAGPIE-Human in the Live Code Generation Domain]
\begin{Verbatim}[fontsize=\small]
### Characteristics of the current coding task:

It begins with a task description, followed by a relevant example (including both input and output) 
with necessary explanatory information, and ends with a set of constraints.
\end{Verbatim}
\end{codebox}
\caption{Prompt for MAGPIE-Human in the Live Code Generation domain.}
\label{fig:magpie-human-generation}
\end{figure}

\begin{figure}[h]
    \centering
\begin{codebox}[Unified Filtering Prompt for Live Code Generation]
\begin{Verbatim}[fontsize=\small]
# Instruction
You need to assess the quality of the given coding question based on its clarity, specificity, 
completeness, and challenge.

The rating scale is as follows:
- very poor: The question is unclear, vague, or disorganized and the examples are wrong. It lacks 
  essential information and context, contains many unreadable characters for humans or has large scale 
  repetitive parts.
- poor: The question is somewhat unclear or lacks important details, requiring significant clarification, 
  or contains irrelevant content and repetition. 
- average: The question has moderate clarity and specificity. The question is basically complete 
 but may require some additional information for full understanding, and the difficulty is relatively easy.
- good: The question is clear and specific, and it is generally well-articulated. The question is complete,
  providing sufficient context to understand its intent, with little to no irrelevant content or noise, 
  and has a moderate level of difficulty.
- excellent: The question is very clear, specific, and well-articulated. It contains all the necessary 
 information and context for providing a comprehensive response and also has a certain level of 
 challenge, requiring multi-step reasoning, complex algorithms or data structures.

## Coding Question
```
{input}
```

## Output Format
Given the coding question, you first need to give an assessment, highlighting the strengths 
and/or weaknesses of the coding question. Then, you need to output a rating from very poor 
to excellent by filling in the placeholders in [...]:
```
{{   
    ''explanation'': ''[...]'',
    ''input_quality'': ''[very poor/poor/average/good/excellent]''
}}
```
\end{Verbatim}
\end{codebox}
\caption{Unified filtering prompt for Live Code Generation domain.}
\label{fig:filter-generation}
\end{figure}

\begin{figure}[h]
    \centering
\begin{codebox}[Unified Filtering Prompt for Live Code Execution]
\begin{Verbatim}[fontsize=\small]
# Instruction
You need to assess the quality of the given python function based on its clarity, specificity, 
completeness, and challenge.

The rating scale is as follows:
- very poor: The function is unclear, vague, or disorganized. It contains many unreadable 
  characters for humans or has large scale repetitive parts.
- poor: The function is unclear or lacks important details, contains irrelevant content and repetition. 
- average: The function has moderate clarity and specificity. The difficulty is relatively easy. 
- good: The function is clear and specific, and it is generally well-articulated.  
- excellent: The function is very clear, specific and well-articulated, and poses a certain level of difficulty. 

## Function 
```
{input}
```

## Output Format
Given the function, you first need to give an assessment, highlighting the strengths and/or  weaknesses 
of the function. Then, you need to output a rating from very poor to excellent by filling in the 
placeholders in [...]:
```
{{   
    ''explanation'': ''[...]'',
    ''input_quality'': ''[very poor/poor/average/good/excellent]''
}}
```
\end{Verbatim}
\end{codebox}
\caption{Unified filtering prompt for Live Code Execution domain.}
\label{fig:filter-execution}
\end{figure}

\begin{figure}[h]
    \centering
\begin{codebox}[Inference Prompt for Live Code Generation]
\begin{Verbatim}[fontsize=\small]
You are an expert Python programmer. You will be given a question (problem specification) and will 
generate a correct Python program that matches the specification and passes all tests. 
You will NOT return anything except for the program.

### Question:\n{question.question_content}
{ if question.starter_code }
### Format: {PromptConstants.FORMATTING_MESSAGE}

```python
{question.starter_code}
```
{ else }
### Format: {PromptConstants.FORMATTING_WITHOUT_STARTER_MESSAGE}

```python
# YOUR CODE HERE
```
{ endif }

### Answer: (use the provided format with backticks)
```
\end{Verbatim}
\end{codebox}
\caption{Inference prompt for Live Code Generation domain.}
\label{fig:inference_generation}
\end{figure}

\begin{figure}[h]
    \centering
\begin{codebox}[Inference Prompt for Live Code Execution]
\begin{Verbatim}[fontsize=\small]
You are given a Python function and an assertion containing an input to the function. Complete the 
assertion with a literal (no unsimplified expressions, no function calls) containing the output when 
executing the provided code on the given input, even if the function is incorrect or incomplete. 
Do NOT output any extra information. Execute the program step by step before arriving at an answer, 
and provide the full assertion with the correct output in [ANSWER] and [/ANSWER] tags,
following the examples.

[PYTHON]
def performOperation(s):
    s = s + s
    return ''b'' + s + ''a''
assert performOperation(s = ''hi'') == ??
[/PYTHON]
[THOUGHT]
Let's execute the code step by step:
1. The function performOperation is defined, which takes a single argument s.
2. The function is called with the argument ''hi'', so within the function, s is initially ''hi''.
3. Inside the function, s is concatenated with itself, so s becomes ''hihi''.
4. The function then returns a new string that starts with ''b'', followed by the value of s 
  (which is now ''hihi''), and ends with ''a''.
5. The return value of the function is therefore ''bhihia''.
[/THOUGHT]
[ANSWER]
assert performOperation(s = ''hi'') == ''bhihia''
[/ANSWER]

[PYTHON]
{code}
assert {input} == ??
[/PYTHON]
[THOUGHT]
\end{Verbatim}
\end{codebox}
\caption{Inference prompt for Live Code Execution domain.}
\label{fig:inference_execution}
\end{figure}

\end{document}